\def\od#1#2{\frac{d#1}{d#2}}
\def\pd#1#2{\frac{\partial #1}{\partial #2}}
\def\fd#1#2{\frac{\delta #1}{\delta #2}}
\def\tpd#1#2{\partial #1/\partial #2}
\def\tfd#1#2{\delta #1/\delta #2}
\def\dzero#1#2{\left.\od{}{#1} #2 \right|_{#1=0}}
\def\grad{\mathop{\mathrm{grad}}}
\def\parentheses#1{\!\left(#1\right)}
\def\tr{\mathop{\mathrm{tr}}\nolimits}
\def\diag{\operatorname{diag}}
\def\R{\mathbb{R}}
\def\defeq{\mathrel{\mathop:}=}
\def\eqdef{=\mathrel{\mathop:}}
\def\ip#1#2{{\left\langle#1,#2\right\rangle}}
\newcommand{\id}{\operatorname{id}}
\def\d{\mathbf{d}}
\def\ins#1{{\bf i}_{#1}}
\def\FL{\mathbb{F}L}
\newcommand\ad{\operatorname{ad}}
\def\SO{\mathsf{SO}}
\def\so{\mathfrak{so}}
\DeclareFontFamily{OMX}{MnSymbolE}{}
\DeclareSymbolFont{MnLargeSymbols}{OMX}{MnSymbolE}{m}{n}
\DeclareFontShape{OMX}{MnSymbolE}{m}{n}{
    <-6>  MnSymbolE5
   <6-7>  MnSymbolE6
   <7-8>  MnSymbolE7
   <8-9>  MnSymbolE8
   <9-10> MnSymbolE9
  <10-12> MnSymbolE10
  <12->   MnSymbolE12
}{}
\DeclareFontShape{OMX}{MnSymbolE}{b}{n}{
    <-6>  MnSymbolE-Bold5
   <6-7>  MnSymbolE-Bold6
   <7-8>  MnSymbolE-Bold7
   <8-9>  MnSymbolE-Bold8
   <9-10> MnSymbolE-Bold9
  <10-12> MnSymbolE-Bold10
  <12->   MnSymbolE-Bold12
}{}
\let\llangle\@undefined
\let\rrangle\@undefined
\DeclareMathDelimiter{\llangle}{\mathopen}%
                     {MnLargeSymbols}{'164}{MnLargeSymbols}{'164}
\DeclareMathDelimiter{\rrangle}{\mathclose}%
                     {MnLargeSymbols}{'171}{MnLargeSymbols}{'171}
\def\metric#1#2{{\left\llangle#1,#2\right\rrangle}}
\newcommand{\argmin}{\mathop{\mathrm{argmin}}}
\newenvironment{tbmatrix}{\left[\begin{smallmatrix}}{\end{smallmatrix}\right]}
\theoremstyle{definition}
\newtheorem{Remark}{Remark}[section]
\theoremstyle{plain}
\newtheorem{Proposition}{Proposition}[section]
\newtheorem{Theorem}{Theorem}[section]
\newtheorem{Corollary}{Corollary}[section]
\begin{document}

%

%

\twocolumn[

\aistatstitle{Variational Optimization on Lie Groups, with Examples of Leading (Generalized) Eigenvalue Problems}

\aistatsauthor{ Molei Tao \And Tomoki Ohsawa }

\aistatsaddress{ Georgia Institute of Technology \And University of Texas at Dallas } 
]

\begin{abstract}
  The article considers smooth optimization of functions on Lie groups. By generalizing NAG variational principle in vector space \citep{wibisonoe7351} to Lie groups, continuous Lie-NAG dynamics which are guaranteed to converge to local optimum are obtained. They correspond to momentum versions of gradient flow on Lie groups. A particular case of $\SO(n)$ is then studied in details, with objective functions corresponding to leading Generalized EigenValue problems: the Lie-NAG dynamics are first made explicit in coordinates, and then discretized in structure preserving fashions, resulting in optimization algorithms with faithful energy behavior (due to conformal symplecticity) and exactly remaining on the Lie group. Stochastic gradient versions are also investigated. Numerical experiments on both synthetic data and practical problem (LDA for MNIST) demonstrate the effectiveness of the proposed methods as optimization algorithms (\emph{not} as a classification method).
\end{abstract}

\section{Introduction}
The algorithmic task of optimization is important in data sciences and other fields. For differentiable objective functions, 1st-order optimization algorithms have been popular choices especially for high dimensional problems, largely due to their scalability, generality, and robustness. A celebrated class of them is based on Nesterov Accelerated Gradient descent (NAG; see e.g., \citep{nesterov1983method,nesterov2018lectures}), also known as a major way to add momentum to Gradient Descent (GD). NAGs enjoy great properties such as quadratic decay of error (instead of GD's linear decay) for convex but not strongly convex objective functions. In addition, the introduction of momentum in NAG softens the dependence of convergence rate on the condition number of the problem. Since high dimensional problems often correspond to larger condition numbers, it is conventional wisdom that adding momentum to gradient descent makes it scale better with high dimensional problems (e.g., \cite{ruder2016overview}, and \cite{cheng2018underdamped} for rigorous results on related problems).

In particular, at least two versions of NAG have been widely used, referred to as NAG-SC and NAG-C for instance in \cite{shi2018understanding}. While their original versions are iterative methods in discrete time, their continuum limits (as the step size goes to zero) have also been studied: for example, \cite{su2014differential} thoroughly investigates these limits as ODEs, and \cite{wibisonoe7351} establishes a corresponding variational principle (along with other  generalizations). Further developments exist; for instance, \cite{shi2018understanding} discusses how to better approximate the original NAGs by high-resolution NAG-ODEs when step size is small but not infinitesimal, and was followed up by \cite{WaTa19Hessian}. Note, however, that no variational principle has been provided yet for the high-resolution NAG-ODEs, to the best of our knowledge.

Although the aforementioned discussions on NAG are in the context of finite dimensional vector space, a variational principle can allow it to be intrinsically generalized to manifolds. Such generalizations are meaningful, because objective functions may not always be a function on vector space, and abundant applications require optimization with respect to parameters in curved spaces. The first part of this article  generalizes continuous NAG dynamics to Lie groups, which are differentiable manifolds that are also groups. Special orthogonal group $\SO(n)$, which contains $n$-by-$n$ real orthogonal matrices with determinant 1, is a classical Lie group, and its optimization is not only relevant to data sciences (see e.g., Sec.\ref{sec_SOn} and Appendix) but also to physical sciences. Some more examples include symplectic groups, spin groups, and unitary groups, all of which play important roles in contemporary physics (e.g., \cite{sattinger2013lie}); for instance, optimization on unitary groups found applications in quantum control (e.g., \cite{glaser1998unitary}), quantum information (e.g., \cite{kitaev2000parallelization}), MIMO communication systems (e.g., \cite{abrudan2009conjugate}), and NMR spectroscopy (e.g., \cite{sorensen1989polarization}).

Variational principles on Lie groups (or more precisely, on the tangent bundle of Lie groups, for introducing velocity) provide a Lagrangian point of view for mechanical systems on Lie groups, and have been extensively studied in geometric mechanics (e.g., \cite{marsden2013introduction,hoscst2009}). Nevertheless, the application of geometric mechanics to NAG-type optimization in this article is new. The second part of this article will discretize the resulting NAG-dynamics on Lie groups, which lead to actual optimization algorithms. These algorithms are also new, although they can certainly be embedded as part of the profound existing field of geometric numerical integration (e.g., the classic monograph of \cite{hairer2006geometric}).

It is also important to mention that optimization on manifolds is already a field so rich that only an incomplete list of references can be provided, e.g., \cite{gabay1982minimizing, smith1994optimization, edelman1998geometry, absil2009optimization, patterson2013stochastic, zhang2016first, zhang2016riemannian, liu2017accelerated, boumal2018global, ma2019there, zhang2018towards, liu2018accelerated}. However, a specialization in Lie group will still be helpful, because the additional group structure (joined efforts with NAG) improves the optimization; for instance, a well known reduction is to, under symmetry, pull the velocity at any location on the Lie group back the tangent space at the identity (known as the Lie algebra).

We also note that NAG (either in vector space or on Lie group) is not restricted to convex optimization. In fact, the proposed methods will be demonstrated on an example of (leading) (Generalized) EigenValues (GEV) problems, which is known to be nonconvex (e.g., \cite{chi2019nonconvex} and its references therein).

GEV is a classical linear algebra problem behind tasks including Linear Discriminant Analysis (see Sec.\ref{sec_LDA} and Appendix) and Canonical Correlation Analysis (e.g., \cite{barnett1987origins}). Due to its importance, numerous GEV algorithms exist (see e.g., \cite{saad2011numerical}), some iterative (e.g., variants of power method) and some direct (e.g., Lanczos-based methods). And we choose GEV as an example to demonstrate our method applied to Lie group $\SO(n)$.

Meanwhile, another line of approaches has also been popular, especially for data sciences problems, often referred to as Oja flow \citep{oja_simplified_1982}, Sanger's rule \citep{sanger_optimal_1989}, and Generalized Hebbian Algorithm \citep{gorrell_generalized_nodate}. While initially proposed for the leading eigenvalue problem, they extend to the leading GEV problem (e.g., \cite{chen_constrained_2019}). For a simple notation, we follow \cite{chen_constrained_2019} and denote them by `GHA'. GHA is based on a matrix-valued ODE, whose long time solution converges to a solution of GEV; more details are reviewed in Appendix. Since the GHA ODE has to be discretized and numerically solved, GHA in practice is still an iterative method, but it is a special one: because of its ODE nature, GHA adapts well to a stochastic generalization of GEV, in which one only has access to noisy/incomplete realizations of the actual matrix (see Sec.\ref{sec_method_SG} for more details), and hence remains popular in machine learning. The proposed methods will also be based on ODEs and suitable to stochastic problems, and thus they will be compared with GHA (Sec.\ref{sec_SG_experiments}). Worth mentioning is, GEV is still being actively investigated; besides \cite{chen_constrained_2019}, recent progress include, for instance, \cite{ge2016efficient,allen2017doubly,arora2017stochastic}. While the main contribution of this article is the momentum-based general Lie group optimization methodology (\textbf{not} GEV algorithms), the derived GEV algorithms are complementary to states-of-arts, because the proposed methods are indifferent to eigengap unlike \cite{ge2016efficient}, and no direct access or inversion of the constraining matrix as different from \cite{allen2017doubly,arora2017stochastic}; however, our method can be made stochastic but not `doubly-stochastic'.

%
%
%
%
%
%

This article is organized as follows. Sec.\ref{sec_generalThry} derives the continuous Lie-group optimization dynamics based on the NAG variational principle. Sec.\ref{sec_SOn_conti} describes, at the continuous level, the case when the Lie group is $\SO(n)$, including the (full) eigenvalue problem and the leading GEV problem; both NAG dynamics and GD (no momentum) are discussed. Sec.\ref{sec_SOn_discrete} then describes discretized algorithms, and Sec.\ref{sec_method_SG} extends them to stochastic problems. Sec.\ref{sec_experiments} provides numerical evidence of the efficacy of our methods, with demonstrations on both synthetic and real data.

\paragraph{Quick user guide:} For GEV, a family of NAG dynamics were obtained. The simplest ones are
\vspace{-2pt}
\begin{flalign}
\text{\underline{Lie-GD}:}	&&
  \dot{R} = R ([R^{T} A R, \mathcal{E}])	\qquad\qquad
\label{eq_LieGD_conti}
\end{flalign}
\vspace{-20pt}
Initial condition has to satisfy: $R(0)^T B R(0)=I$.

\begin{flalign}
\text{\underline{Lie-NAG}:}	&& 
  \dot{R} = R \xi,
  ~
    \dot{\xi}
  = -\gamma(t) \xi
  + [R^{T} A R, \mathcal{E}]
\label{eq_LieNAG_conti}
\end{flalign}
where $\mathcal{E} :=
  \begin{tbmatrix}
    I_{l} & 0 \\
    0 & 0
  \end{tbmatrix}_{n\times n}$.
Initial conditions have to satisfy: $R(0)^T B R(0)=I$ and $\xi(0)^T=-\xi(0)$.

Constant $\gamma$ and $\gamma(t)=3/t$ respectively correspond to Lie-NAG-SC and Lie-NAG-C. If it is affordable to tune the constant $\gamma$ value, our general recommendation is Lie-NAG-SC. Its associated optimization algorithm is Algm.\ref{alg_LieNAG}, and Algm.\ref{alg_LieGD} is also provided for Lie-GD.

\section{Variational Optimization on Lie Group: the General Theory}
\label{sec_generalThry}
\subsection{Gradient Flow}
Our focus is optimization problems on Lie groups:
Let $\mathsf{G}$ be a compact Lie group, $f\colon \mathsf{G} \to \R$ be a smooth function, and consider the optimization problem
\begin{equation*}
  \min_{g \in \mathsf{G}}f(g).
\end{equation*}
We may define the gradient flow for this problem as follows:
Let $T\mathsf{G}$ and $T^{*}\mathsf{G}$ be the tangent and cotangent bundles of $\mathsf{G}$, $e \in \mathsf{G}$ be the identity, and $\mathfrak{g} \defeq T_{e}\mathsf{G}$ be the Lie algebra of $\mathsf{G}$.
Suppose that $\mathfrak{g}$ is equipped with an inner product $\metric{\xi}{\eta} \defeq \ip{\mathbb{I}\xi}{\eta}$ with an isomorphism $\mathbb{I}\colon \mathfrak{g} \to \mathfrak{g}^{*}; \xi \mapsto \mathbb{I}(\xi)$ where $\mathfrak{g}^{*}$ is the dual of the Lie algebra $\mathfrak{g}$, and $\ip{\,\cdot\,}{\,\cdot\,}$ stands for the natural dual pairing.
One can naturally extend this metric to a left-invariant metric on $\mathsf{G}$ by defining, $\forall g \in \mathsf{G}$ and $\forall v, w \in T_{g}\mathsf{G}$, $\metric{v}{w} \defeq \metric{T_{g}\mathsf{L}_{g^{-1}}(v)}{T_{g}\mathsf{L}_{g^{-1}}(w)}$, where $\mathsf{L}_{g}\colon \mathsf{G} \to \mathsf{G}; h \mapsto g h$ is the left translation by $g \in \mathsf{G}$ and $T_{h}\mathsf{L}_{g}\colon T_{h}\mathsf{G} \to T_{g h}\mathsf{G}$ is its tangent map.

Now, we define the gradient vector field $\grad f$ on $\mathsf{G}$ as follows:
For any $g \in \mathsf{G}$ and any $\dot{g} \in T_{g}\mathsf{G}$,
\begin{equation*}
  \metric{ (\grad f)(g) }{ \dot{g} } = \ip{ \d{f}(g) }{ \dot{g} }
  \quad
  \forall g \in \mathsf{G}
  \quad
  \forall \dot{g} \in T_{g}\mathsf{G},
\end{equation*}
where $\d$ stands for the exterior differential.
This gives
\begin{gather*}
  (\grad f)(g) = T_{e}\mathsf{L}_{g} \circ \mathbb{I}^{-1} \circ T_{e}^{*}\mathsf{L}_{g}(\d{f}(g)),
\end{gather*}
where $T_{e}^{*}\mathsf{L}_{g}$ is the dual of $T_{e}\mathsf{L}_{g}$, i.e., $\forall\alpha_{g} \in T_{g}^{*}\mathsf{G}$ and $\forall\xi \in \mathfrak{g}$, $\ip{T_{e}^{*}\mathsf{L}_{g}(\alpha_{g})}{\xi} = \ip{\alpha_{g}}{T_{e}\mathsf{L}_{g}(\xi)}$.
Hence the gradient descent equation is given by
\begin{equation}
  \label{eq:gradient_flow}
  \dot{g} = -(\grad f)(g) = -T_{e}\mathsf{L}_{g} \circ \mathbb{I}^{-1} \circ T_{e}^{*}\mathsf{L}_{g}(\d{f}(g)).
\end{equation}

\subsection{Adding Momentum: the Variational Optimization}
Our work provides a natural extension of variational optimization of \citet{wibisonoe7351} to Lie groups making use of the geometric formulation of the Euler--Lagrange equation on Lie groups.
Specifically, let us define the Lagrangian $L \colon T\mathsf{G} \times \R \to \R$ as follows:
\begin{equation}
  \label{eq:L-TG}
  L(g, \dot{g}, t) \defeq r(t) \parentheses{ \frac{1}{2}\metric{\dot{g}}{\dot{g}} - f(g) },
\end{equation}
where $r\colon \R \to \R_{>0}$ is a smooth positive-valued function.
Instead of working with the tangent bundle $T\mathsf{G}$ directly, it is more convenient to use the left-trivialization of $T\mathsf{G}$, i.e., we may identify $T\mathsf{G}$ with $\mathsf{G} \times \mathfrak{g}$ via the map $\mathsf{G} \times \mathfrak{g} \to T\mathsf{G}; (g, \xi) \mapsto \parentheses{ g, T_{e}\mathsf{L}_{g}(\xi) }$.
Under this identification, we have the Lagrangian $L \colon \mathsf{G} \times \mathfrak{g} \times \R \to \R$ defined as
\begin{equation}
  \label{eq:L}
  L(g, \xi, t) \defeq r(t)\parentheses{ \frac{1}{2}\ip{\mathbb{I}(\xi)}{\xi} - f(g) }.
\end{equation}
The Euler--Lagrange equation for this Lagrangian is (see, e.g., \citet[Section~7.3]{hoscst2009} and also \citet{marsden2013introduction})
\begin{equation*}
  \label{eq:E-L_in_L}
  \od{}{t}\parentheses{ \fd{L}{\xi} } = \ad_{\xi}^{*}\fd{L}{\xi} + T_{e}^{*}\mathsf{L}_{g}(\d_{g}L),
\end{equation*}
along with 
\begin{equation}
\label{eq:reconstruction}
	\dot{g} = T_{e}\mathsf{L}_{g}(\xi) \eqdef g \xi,
\end{equation}
where $\ad^{*}$ is the coadjoint operator; $\tfd{L}{\xi} \in \mathfrak{g}^{*}$ is defined so that, for any $\delta\xi \in \mathfrak{g}$,
\begin{equation*}
  \ip{ \fd{L}{\xi} }{ \delta\xi }
  = \dzero{s}{L(g, \xi + s\delta\xi, t)};
\end{equation*}
also note that $\d_{g}L$ stands for the exterior differential of $g \mapsto L(g,\xi,t)$.
Using the above expression~\eqref{eq:L} of the Lagrangian, we obtain
\begin{equation}
  \label{eq:E-L}
  \od{}{t}\mathbb{I}(\xi)
  = -\gamma(t) \mathbb{I}(\xi)
  + \ad_{\xi}^{*}\mathbb{I}(\xi)
  - T_{e}^{*}\mathsf{L}_{g}(\d{f}(g)),
\end{equation}
where we defined $\gamma(t) \defeq r'(t)/r(t)$.

\paragraph{Choices of $\gamma$.} We will mainly consider $\gamma(t)=\gamma$ (constant) and $\gamma(t)=3/t$, derived from $r=\exp(\gamma t)$ and $r=t^3$. In vector space, these two choices respectively correspond to, as termed for instance in \cite{shi2018understanding}, NAG-SC and NAG-C, which are the continuum limits of two classical versions of Nesterov's Accelerated Gradient methods \citep{nesterov1983method,nesterov2018lectures}.

\paragraph{Lyapunov function.} Let $t \mapsto (g(t),\xi(t))$ be a solution of eq.~\eqref{eq:E-L}.
Assuming that $g_{0}$ is an isolated local minimum of $f$, we can show that the dynamics starting in a neighborhood of $g_{0}$ converges to $g_{0}$ as follows.
Define the ``energy'' function $E\colon \mathsf{G} \times \mathfrak{g} \to \R$ as
\begin{equation}
  \label{eq:E}
  E(g,\xi) \defeq \frac{1}{2}\metric{\xi}{\xi} + f(g)
  = \frac{1}{2}\ip{\mathbb{I}(\xi)}{\xi} + f(g).
\end{equation}
This gives a Lyapunov function.
In fact, there exists a neighborhood $U$ of $(g_{0},0)$ such that $E(g,\xi) \ge f(g) > f(g_{0})$ for any $(g,\xi) \in U\backslash\{(g_{0},0)\}$.
Moreover, we have $\od{}{t} E(g(t),\xi(t)) = -\gamma \metric{ \xi }{ \xi } \le 0$, where the equality implies $\xi = 0$, for which \eqref{eq:E-L} gives $\d{f}(g) = 0$, which locally gives $g = g_{0}$.

\section{The Example of $\SO(n)$ and Its Application to Leading GEV}
\label{sec_SOn}
\subsection{The Continuous Formulations}
\label{sec_SOn_conti}

\subsubsection{The Symmetric Eigenvalue Problem}
Let $A$ be a real symmetric $n \times n$ matrix, and define, as in \cite{brockett1989least,MaMa2002},
\begin{equation*}
  f \colon \SO(n) \to \R;
  \qquad
  f(R) \defeq \tr(R^{T}A R \mathcal{N}),
\end{equation*}
where $\mathcal{N} \defeq \diag(1, 2, \dots, n)$.
We equip the Lie algebra $\so(n)$ with the inner product $\metric{\xi}{\eta} \defeq \tr(\xi^{T}\eta)$.
Then we may identify $\so(n)^{*}$ with $\so(n)$ via this inner product.
Then the ``force'' term in \eqref{eq:E-L} is given by
$T_{I}^{*}\mathsf{L}_{R}(\d{f}(R)) = [R^{T} A R, \mathcal{N}]$.
Since $\ad_{\xi}^{*}\mu = [\mu,\xi]$ for any $\xi \in \so(n)$ and $\mu \in \so(n)^{*} \cong \so(n)$, \eqref{eq:E-L} becomes
\begin{equation}
  \label{eq_qgrejvboqob3hbr5}
  \dot{R} = R \xi,
  \
    \dot{\xi}
  = -\gamma \xi
  + \mathbb{I}^{-1} \parentheses{
    [\mathbb{I}(\xi), \xi]
    - [R^{T} A R, \mathcal{N}]
  },
\end{equation}
whereas the gradient descent equation~\eqref{eq:gradient_flow} gives
\begin{equation}
  \label{eq_gqhuregorq8h09hb}
  \dot{R} = - R \mathbb{I}^{-1}([R^{T} A R, \mathcal{N}]).
\end{equation}

\begin{Remark}[Rigorous results v.s. intuitive addition of momentum]
	The above dynamics work for any positive definite isomorphism $\mathbb{I}\colon \mathfrak{g} \to \mathfrak{g}^{*}$. For simplicity, we will use $\mathbb{I}=\text{id}$ (where $\mathfrak{g}^{*}$ is identified with $\mathfrak{g}$) in implementations in this article. In this case, the $[\mathbb{I}(\xi), \xi]$ term and the $\mathbb{I}^{-1}$ operation vanish, and the momentum version \eqref{eq_qgrejvboqob3hbr5} is heuristically obtainable from \eqref{eq_gqhuregorq8h09hb} just like how momentum was added to gradient flow in vector spaces. Otherwise, they create additional nontrivial nonlinearities that account for the curved space.
\end{Remark}

\begin{Remark}[Relation to double-bracket]
  When $\mathbb{I} = \id$, the gradient flow~\eqref{eq_gqhuregorq8h09hb} becomes $\dot{R} = - R([R^{T} A R, \mathcal{N}])$. By setting $M(t) \defeq R(t)^{T} A R(t)$, we recover the double-bracket equation $\dot{M} = -[M,[M,\mathcal{N}]]$ of ~\citet{Br1991} (see also \citet{BlBrRa1992}).
  Note that there is a sign difference from \cite{Br1991} because \citeauthor{Br1991}'s is gradient \textit{ascent}.
\end{Remark}

\begin{Remark}[Generality]
	The proposed methods, Lie-NAG \eqref{eq_qgrejvboqob3hbr5} and Lie-GD \eqref{eq_gqhuregorq8h09hb}, are indifferent to the absolute location of $A$'s eigenvalues, because they are invariant to the shift $A \mapsto A+\lambda I$. To see this, note $[R^T A R, \mathcal{N}] \mapsto [R^T (A+\lambda I) R,\mathcal{N}]=[R^T A R, \mathcal{N}]+\lambda[R^T R,\mathcal{N}]=[R^T A R, \mathcal{N}]+\lambda[I,\mathcal{N}]=[R^T A R, \mathcal{N}]$. Therefore, the proposed methods work the same no matter whether $A$ is positive/negative-definite. In the generalized eigenvalue setting (see future Sec.\ref{sec_GEV}), the same reasoning and invariance hold for $L^{-T} A L^{-1} \mapsto L^{-T} A L^{-1}+\lambda I$ where $L^T L=B$.
\label{rmk_shiftInvariance}
\end{Remark}

\subsubsection{The Leading $l$ Eigenvalue Problem}
Let $A$ be a real symmetric $n \times n$ matrix. Since finding the smallest $l$ eigenvalues of $A$ is the same as finding the largest $l$ eigenvalues of $-A$, define
\begin{equation}
  f \colon \SO(n) \to \R;
  \qquad
  f(R) \defeq -\tr(E^{T}R^{T}A R E),
  \label{eq_GEVminimization}
\end{equation}
where $E \defeq \begin{tbmatrix}
  I_{l} \\
  0
\end{tbmatrix}$ is $n \times l$ where $I_{l}$ is the $l \times l$ identity matrix and $0$ is the $(n-l) \times l$ zero matrix.

The cost function is almost the same as the previous case except that $\mathcal{N}$ is now replaced by
\begin{equation*}
  \mathcal{E} \defeq E E^{T} =  
  \begin{tbmatrix}
    I_{l} & 0 \\
    0 & 0
  \end{tbmatrix}.
\end{equation*}
So we have $T_{I}^{*}\mathsf{L}_{R}(\d{f}(R)) = -[R^{T} A R, \mathcal{E}]$.

\subsubsection{The Leading $l$ Generalized Eigenvalues}
\label{sec_GEV}
Consider the leading $l$ Generalized EigenValues problem (GEV): given $n$-by-$n$ symmetric $A$ and $n$-by-$n$ positive definite $B$, we seek an optimizer of
\begin{equation}
  \label{eq_GEV}
  \max_{V\in \mathbb{R}^{n\times l}} \text{tr}(V^T A V)
  \quad\text{s.t.}\quad V^T B V = I_{l\times l} .
\end{equation}
It can be seen, by Cholesky decomposition $B=L^T L$ and a Lie group isomorphism $X\mapsto LX$, that
\begin{Proposition}
	$G=\{X|X\in\mathbb{R}^{n\times n},X^T BX=I\}$ is a Lie group. Its identity is $L^{-1}$, and its multiplication is not the usual matrix multiplication but $X_1\cdot X_2=X_1L X_2$.
\end{Proposition}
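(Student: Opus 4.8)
The plan is to transport the known group structure on the orthogonal group $\SO(n)$ (or rather $\Orth(n)$, but here it suffices to work with the full solution set of $X^TBX=I$) to the set $G$ via an explicit affine bijection, and check that the transported structure is exactly the one claimed. First I would fix a Cholesky factorization $B=L^TL$ with $L$ invertible (which exists because $B$ is symmetric positive definite), and observe that $X^TBX=I$ is equivalent to $(LX)^T(LX)=I$, i.e. $LX\in\Orth(n)$. Thus the map $\varphi\colon G\to\Orth(n)$, $\varphi(X)\defeq LX$, is a bijection with inverse $\varphi^{-1}(Y)=L^{-1}Y$; since $\Orth(n)$ is a smooth (embedded) submanifold of $\Mat_{n\times n}$ and $\varphi$ is the restriction of a linear isomorphism of $\Mat_{n\times n}$, $G=\varphi^{-1}(\Orth(n))$ is a smooth submanifold and $\varphi$ is a diffeomorphism.

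Next I would define the multiplication on $G$ by pulling back the matrix multiplication on $\Orth(n)$: for $X_1,X_2\in G$ set $X_1\cdot X_2\defeq \varphi^{-1}\bigl(\varphi(X_1)\varphi(X_2)\bigr)=L^{-1}(LX_1)(LX_2)=X_1LX_2$, which is exactly the stated formula; one checks $X_1LX_2\in G$ directly, $(X_1LX_2)^TB(X_1LX_2)=X_2^TL^T X_1^T B X_1 L X_2 = X_2^T L^T L X_2 = X_2^TBX_2=I$, using $X_1^TBX_1=I$. The identity element is $\varphi^{-1}(I)=L^{-1}$, and indeed $X\cdot L^{-1}=XLL^{-1}=X=L^{-1}LX=L^{-1}\cdot X$. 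The inverse of $X$ is $\varphi^{-1}\bigl(\varphi(X)^{-1}\bigr)=L^{-1}(LX)^{-1}=L^{-1}X^{-1}L^{-1}$, which lies in $G$ and satisfies $X\cdot(L^{-1}X^{-1}L^{-1})=XL\,L^{-1}X^{-1}L^{-1}=L^{-1}$. Finally, smoothness of multiplication and inversion on $G$ is inherited: $\varphi$ conjugates these operations into the smooth group operations of the Lie group $\Orth(n)$, so $G$ with $(\cdot)$ is a Lie group and $\varphi$ is a Lie group isomorphism onto $\Orth(n)$. (If one insists on $\SO(n)$ rather than $\Orth(n)$, one restricts to the connected component, or equivalently to the set $\{X: X^TBX=I,\ \det X=\det(L^{-1})\}$.)

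There is essentially no deep obstacle here; the whole statement is a transport-of-structure argument and the only points requiring a moment's care are (i) confirming that the claimed binary operation is genuinely the pullback of ordinary matrix multiplication rather than something else — this is just the computation $\varphi^{-1}(\varphi(X_1)\varphi(X_2))=X_1LX_2$ — and (ii) making sure $G$ is nonempty and a manifold of the expected dimension, which follows because $\varphi$ is the restriction of a global linear isomorphism and $L^{-1}\in G$. The mildly subtle bookkeeping issue is associativity of the new product, which one should not verify by hand but simply note is automatic: $\varphi$ being a bijection intertwining $(\cdot)$ with matrix multiplication forces all group axioms to transfer. I would present the proof in roughly the order above: Cholesky and the bijection $\varphi$; the manifold structure; the transported operations with their explicit formulas; and a one-line remark that all axioms and smoothness follow because $\varphi$ is an isomorphism of the relevant structures.
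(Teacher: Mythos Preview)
Your proposal is correct and follows precisely the route the paper indicates: the paper states only that the result ``can be seen, by Cholesky decomposition $B=L^TL$ and a Lie group isomorphism $X\mapsto LX$,'' and your argument is exactly the detailed execution of that hint, transporting the group structure of $\Orth(n)$ through the diffeomorphism $\varphi(X)=LX$.
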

Therefore, in theory, GEV can be solved by padding $V$ into $X$ and then following our general approach \eqref{eq:E-L}.

The point of this section is to make this solution explicit, and more importantly, to show $L$ is never explicitly needed, which leads to computational efficiency. In fact, the same NAG dynamics
\begin{equation}
  \dot{R} = R \xi,	\qquad
  \dot{\xi}
  = -\gamma(t) \xi
    + [R^{T} A R, \mathcal{E}]
  \label{eq_avikuh31o87th4o81ho36515}
\end{equation}
with initial conditions satisfying
\[
	R(0)^T B R(0)=I, \quad \xi(0)^T=-\xi(0)
\]
will solve \eqref{eq_GEV} upon projecting the first $l$ columns of $R$ into $V$.

Note the only difference from the previous two sections is the initial condition on $R$. In addition, although positive definite $B$ is needed for the group isomorphism, it is only a sufficient (not necessary) condition for NAG \eqref{eq_avikuh31o87th4o81ho36515} to work.

A rigorous justification of why \eqref{eq_avikuh31o87th4o81ho36515} works for not only EV but also GEV can be found in Appendix, where one will also find the proof of a quick sanity check:
\begin{Theorem}
	Under \eqref{eq_avikuh31o87th4o81ho36515} and consistent initial condition, $R(t)^T B R(t)=I$ and $\xi(t)^T=-\xi(t)$ for all $t$.
	\label{eq_NAGstaysOnLieGroup}
\end{Theorem}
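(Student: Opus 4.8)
The plan is to establish the two invariants in sequence, each by writing down a homogeneous linear matrix ODE for the quantity that measures the violation of the invariant and invoking uniqueness of solutions.

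First I would prove the skew-symmetry of $\xi$, which in fact does not use the constraint on $R$ at all. Since $A$ is symmetric, $R^{T}AR$ is symmetric for every $R$, and $\mathcal{E}$ is symmetric, so the commutator $[R^{T}AR,\mathcal{E}]$ is skew-symmetric, because the commutator of two symmetric matrices is always skew. Let $\eta(t) \defeq \xi(t) + \xi(t)^{T}$ be twice the symmetric part of $\xi$. Transposing the $\xi$-equation in \eqref{eq_avikuh31o87th4o81ho36515} and adding it to itself gives $\dot{\eta} = -\gamma(t)\,\eta + \bigl([R^{T}AR,\mathcal{E}] + [R^{T}AR,\mathcal{E}]^{T}\bigr) = -\gamma(t)\,\eta$, a homogeneous linear ODE. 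Since $\eta(0) = \xi(0) + \xi(0)^{T} = 0$ by hypothesis, integrating gives $\eta(t) = \eta(0)\exp\!\bigl(-\!\int_{0}^{t}\gamma(s)\,ds\bigr) = 0$, i.e. $\xi(t)^{T} = -\xi(t)$ for all $t$.

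Next, with skew-symmetry of $\xi$ in hand, I would turn to $P(t) \defeq R(t)^{T} B R(t)$. Using the reconstruction equation $\dot{R} = R\xi$,
\[
  \dot{P} = \dot{R}^{T} B R + R^{T} B \dot{R} = \xi^{T} R^{T} B R + R^{T} B R\, \xi = \xi^{T} P + P \xi = P\xi - \xi P = [P,\xi],
\]
where the fourth equality uses $\xi^{T} = -\xi$. This is again a homogeneous linear ODE in $P$, and the constant path $P \equiv I$ solves it since $[I,\xi] = 0$. Because $P(0) = R(0)^{T} B R(0) = I$ by hypothesis, uniqueness of solutions to linear ODEs forces $P(t) = I$ for all $t$. (Equivalently, writing $R(t) = R(0)\,Q(t)$ with $\dot{Q} = Q\xi$ and $Q(0) = I$, the same computation shows $Q^{T}Q \equiv I$, so $Q(t) \in \SO(n)$ and $P(t) = Q(t)^{T} R(0)^{T} B R(0)\, Q(t) = Q(t)^{T}Q(t) = I$.)

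There is no substantial obstacle: the argument is Gr\"onwall/uniqueness applied twice, and the two steps must be carried out in this order because the $P$-computation relies on $\xi$ being skew. The only point requiring a word of care is $\gamma(t) = 3/t$, which is singular at $t = 0$; in that case the initial conditions are imposed at some $t_{0} > 0$ (or one observes that both $\eta$ and $P - I$ extend continuously to $t=0$ and the linear ODEs have unique solutions on $(0,\infty)$), and the conclusion is unchanged.
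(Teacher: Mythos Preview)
Your proof is correct and follows essentially the same approach as the paper: first show $\xi$ stays skew-symmetric (the paper uses the explicit variation-of-constants formula for $\xi$, you use the equivalent homogeneous ODE for $\xi+\xi^{T}$), then show $R^{T}BR\equiv I$ by differentiating and using $\xi^{T}=-\xi$. If anything, your version is a bit cleaner in that you explicitly frame the $R^{T}BR$ step as a linear ODE with $I$ as a solution and invoke uniqueness, whereas the paper substitutes $R^{T}BR=I$ inside the derivative computation; you also address the $\gamma(t)=3/t$ singularity, which the paper does not mention.
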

The objective function itself does not decrease monotonically in NAG, because it acts as potential energy, which exchanges with kinetic energy, but the total energy decreases (eq.\ref{eq:E}).

On the other hand, if one considers Lie-GD, which can be shown to generalize to GEV also by only modifying the initial condition (given by \eqref{eq_LieGD_conti}), then not only does $R(t)$ stay on the Lie group $G$ (see Appendix), but also is the objective function $\tr [-(R^T(t)A R(t) \mathcal{E})]$ monotone (by construction).

\subsection{The Discrete Algorithms}
\label{sec_SOn_discrete}
Define Cayley transformation\footnote{It is the same as Pade(1,1) approximation.} as $\text{Cayley}(\xi):=(I-\xi/2)^{-1} (I+\xi/2)$. It will be useful as a 2nd-order structure-preserving approximation of matrix exp, the latter of which is computationally too expensive. More precisely, $\exp(h\xi) = \text{Cayley}(h \xi) +\mathcal{O}(h^3)$.

\vspace{-1ex}
\paragraph{Lie-GD.} We adopt a 1st-order (in $h$) explicit discretization of the dynamics $\dot{R} = R ([R^{T} A R, \mathcal{E}])$:
\vspace{-1ex}
\begin{algorithm}[H]
\caption{A 1st-order Lie-GD for leading GEV}
\begin{algorithmic}[1]
\STATE Initialize with some $R_0$ satisfying $R_0^T B R_0=I$.
\FOR{$i=0,\cdots,$TotalSteps-1}
	\STATE $f_i \leftarrow R_i^T A R_i \mathcal{E} - \mathcal{E}R_i^T A R_i$.
	\STATE $R_{i+1} \leftarrow R_i \text{Cayley}(h f_i)$
\ENDFOR
\STATE Output $R_{\text{TotalSteps}}$ as $\argmin f$ in \eqref{eq_GEVminimization}.
\end{algorithmic}
\label{alg_LieGD}
\end{algorithm}
\vspace{-2.5ex}
Note Algm.\ref{alg_LieGD} is more accurate than forward Euler discretization despite that both are 1st-order. This is because all $R_i$'s it produces will remain on the Lie group (i.e., $R_i^T B R_i=I$; see Thm.\ref{thm_LieGroupPreservation_discrete} in Appendix).

\vspace{-1ex}
\paragraph{Lie-NAG.} We present a 2nd-order (in $h$) explicit discretization of the dynamics $\dot{R} = R \xi, ~ \dot{\xi} = -\gamma(t) \xi + [R^{T} A R, \mathcal{E}]$. Unlike the Lie-GD case, the discretization was achieved by the powerful machinery of operator splitting, and can be easily generalized to arbitrarily high-order (e.g., \cite{mclachlan2002splitting,tao2016explicit}), provided that Cayley transformation was replaced by a higher-order Lie-group-preserving approximation of matrix exponential.

More precisely, denote by $\phi^h$ the exact $h$-time flow of the NAG dynamics, and by $\phi_1^h$ and $\phi_2^h$ some $p$-th order approximations of the $h$-time flows of $\dot{R} = R \xi, ~ \dot{\xi} = 0$ and $\dot{R} = 0, ~ \dot{\xi} = -\gamma(t) \xi + [R^{T} A R, \mathcal{E}]$. Note even though $\phi$ is unavailable, the latter systems are analytically solvable, so if $\exp(\xi h)$ is exactly computed, $\phi_1$ and $\phi_2$ can be made exact. Even if they are just $p$-th order approximations ($p\geq 2$), operator splitting yields $\phi^h =\phi_2^{h/2} \circ \phi_1^h \circ \phi_2^{h/2} + \mathcal{O}(h^3)$. Other ways of composing $\phi_1$,$\phi_2$ can lead to higher order methods (Appendix describes some 4th-order options), with maximum order capped by $p$. For simpler coding, $\dot{\xi} = -\gamma(t) \xi + [R^{T} A R, \mathcal{E}]$ can be further split into $\dot{\xi} = -\gamma(t) \xi$ and $\dot{\xi}=[R^{T} A R, \mathcal{E}]$, and Algm.\ref{alg_LieNAG} is based on $\phi_3^{h/2} \circ \phi_2^{h/2} \circ \phi_1^{h} \circ \phi_2^{h/2} \circ \phi_3^{h/2}$:
\vspace{-1ex}
\begin{algorithm}[H]
\caption{A 2nd-order Lie-NAG for leading GEV}
\begin{algorithmic}[1]
\STATE Initialize with some $R_0$ and $\xi_0$ satisfying $R_0^T B R_0=I$ and $\xi_0^T=-\xi_0$.
\FOR{$i=0,\cdots,$TotalSteps-1}
	\STATE $\xi_{i'} \leftarrow \xi_i + h/2 (R_i^T A R_i \mathcal{E} - \mathcal{E}R_i^T A R_i)$.
	\STATE $\xi_{i'} \leftarrow \begin{cases} 
							\exp(-\gamma h/2) \xi_{i'}, & \quad\text{for NAG-SC} \\ 
							((ih)^3/((i+1/2)h)^3) \xi_{i'},   & \quad\text{for NAG-C}
						   \end{cases}$.
	\STATE $R_{i+1} \leftarrow R_i \text{Cayley}(h \xi_{i'})$.
	\STATE $\xi_{i'} \leftarrow \begin{cases} 
							\exp(-\gamma h/2) \xi_{i'}, & \text{NAG-SC} \\ 
							(((i+1/2)h)^3/((i+1)h)^3) \xi_{i'},   & \text{NAG-C}
						   \end{cases}$.
	\STATE $\xi_{i+1} \leftarrow \xi_{i'} + h/2 (R_{i+1}^T A R_{i+1} \mathcal{E} - \mathcal{E}R_{i+1}^T A R_{i+1})$.
\ENDFOR
\STATE Output $R_{\text{TotalSteps}}$ as $\argmin f$ in \eqref{eq_GEVminimization}.
\end{algorithmic}
\label{alg_LieNAG}
\end{algorithm}
\vspace{-3ex}

Also by Thm.\ref{thm_LieGroupPreservation_discrete}, all $R_i$'s remain on the Lie group if arithmetics have infinite machine precision.

In addition, Algm.\ref{alg_LieNAG} is conformal symplectic (see Appendix), which is indicative of favorable accuracy in long time energy behavior. To prove so, note both $\phi_1$ and $\phi_3$ as exact Hamiltonian flows preserve the canonical symplectic form, and two substeps of $\phi_2$ as linear maps discount it by a multiplicative factor of $r(t_i)/r(t_{i+1})$. This exactly agrees with the continuous theory in Appendix.

\subsection{Generalization to Stochastic Problems}
\label{sec_method_SG}
\paragraph{Setup:} now let us consider a Stochastic Gradient (SG) setup, where one may not have full access to $A$ but only a finite collection of its noisy realizations. More precisely, given one realization of i.i.d.~random matrices $A_1,\cdots,A_K$, the goal is to compute the leading (generalized) eigenvalues of $A=\frac{1}{K}\sum_{k=1}^K A_k$ based on $A_k$'s without explicitly using $A$.
\vspace{-2ex}
\paragraph{Implementation:} following the classical stochastic gradient approach, we simply replace $A$ in each algorithm by $A_\kappa$, where $\kappa$ is a uniform random variable on $[K]$, independently drawn at each timestep. 

\begin{Remark}
	Like \cite{ge2016efficient} and unlike \cite{chen_constrained_2019}, the proposed methods do not allow $B$ to be a stochastic approximation. Only $A$ can be stochastic. On the other hand, unlike both \cite{ge2016efficient} and \cite{chen_constrained_2019}, we do not require a direct access to $B$, and all information about $B$ is reflected in the initial condition $R(0)$.
\end{Remark}
\vspace{-2ex}
\paragraph{Intuition:} we now make heuristic arguments to gain insights about the performance of the method.

First, based on the common approximation of stochastic gradient as batch gradient plus Gaussian noise (see e.g., \cite{li2019stochastic} for some state-of-art quantifications of the accuracy of this approximation), assume $A_\kappa=A+\sigma H$ where $H$ is a symmetric Gaussian matrix (assumed as $H=\Xi+\Xi^T$ where $\Xi$ is an $n$-by-$n$ matrix with i.i.d.~standard normal elements), i.i.d.~at each step. Then the gradient $[R^T A_\kappa R,\mathcal{E}]$ is, in distribution and conditioned on $R$, equal to $[R^T A R,\mathcal{E}] + 2 \sigma \Xi$. This is because $[R^T A_\kappa R,\mathcal{E}]$ is Gaussian and its mean is $[R^T A R,\mathcal{E}]$ and covariance is $\sigma^2 \text{covar}[ [R^T H R,\mathcal{E}]|R]$, which can be computed to be $4 \sigma^2 I$, independent of $R$ as long as $R^T R=I$ and $\mathcal{E}$ is a degenerate identity. Therefore, at least in the case of $\mathbb{I}=\text{id}$, the Lie-NAG SG dynamics can be understood through
\begin{equation}
  \dot{R} = R \xi,	\qquad
  \dot{\xi}
  = -\gamma(t) \xi
    + [R^{T} A R, \mathcal{E}] + 2 \hat{\sigma} E,
  \label{eq_NAG_SG_SDE}
\end{equation}
where $E$ is a skew-symmetric white-noise, i.e., $E_{ij}$ with $i<j$ being i.i.d.~white noise, $E_{ji}=-E_{ij}$, and $E_{ii}=0$, and $\hat{\sigma}=\sigma$ in this continuous setting.

Worth mentioning is, once one uses a numerical discretization, namely
\begin{align*}
	\xi_{i+1} &= \xi_i - h \gamma(t_i)\xi_i + h [R_i^T A_{\kappa_i} R_i, \mathcal{E}] + o(h), \\
	&\stackrel{D}{=} \xi_i - h \gamma(t_i)\xi_i + h [R_i^T A R_i, \mathcal{E}] + h 2\sigma E_i + o(h)
\end{align*}
then since $\kappa$ does not randomize infinitely frequently, the effective noise amplitude $\hat{\sigma}$ gets scaled as
\begin{equation}
	\hat{\sigma}=\sqrt{h}\sigma+o(\sqrt{h}),
	\label{eq_effectiveNoiseForSG}
\end{equation}
because a 1st-order discretization of \eqref{eq_NAG_SG_SDE} should have its $\xi$ component being
\[
	\xi_{i+1} = \xi_i - h \gamma(t_i)\xi_i + h [R_i^T A R_i, \mathcal{E}] + \sqrt{h} 2 \hat{\sigma} E_i + o(h)
\]
due to stochastic calculus. This leads to $
  h 2 \sigma E_i = \sqrt{h} 2 \hat{\sigma} E_i + o(h)$, and hence \eqref{eq_effectiveNoiseForSG}.

Secondly, recall an analogous vector space setting, in which one considers
\[
	\dot{q}=p,	\qquad	\dot{p}=-\gamma p - \nabla V(q) + \hat{\sigma} e
\]
where $e$ is standard vectorial white-noise. It is well known that under reasonable assumptions (e.g., \cite{pavliotis2014stochastic}) this diffusion process admits, and converges weakly to an invariant distribution of $Z^{-1}\exp(-H(q,p)/kT)dq dp$, where $H=\|p\|^2/2+V(q)$ is the Hamiltonian, $Z$ is some normalization constant, and $kT=\hat{\sigma}^2/(2\gamma)$ is the temperature (with unit).

It is easy to see that for the purpose of optimization, the temperature should be small. If one uses vanishing stepsizes, since $\hat{\sigma}^2=h \sigma^2$, $kT \rightarrow 0$, and stochastic optimization can be guaranteed to work (more details in \cite{robbins1951stochastic}). If $h$ is small but not infinitesimal, $q$ (or $R$) is still concentrated near the optimum value(s) with high probability.

Now recall Lie-NAG-SC uses constant $\gamma$; Lie-NAG-C, on the contrary, uses $\gamma(t)=3/t$. This means Lie-NAG-SC equipped with SG converges to some invariant distribution at temperature $h\sigma^2/(2\gamma)$, but Lie-NAG-C-SG's `temperature' $kT=h \sigma^2/(6/t)$ grows unbounded with $t$ for constant $h$; i.e., constant stepsize Lie-NAG-C-SG doesn't converge even in a weak sense.

This is another reason that our general recommendation is Lie-NAG-SC over Lie-NAG-C. On the other hand, there are multiple possibilities to correct the non-convergence of Lie-NAG-C: (i) appropriately vanishing $h$ can lead to recovery of an invariant distribution, but to obtain a fixed accuracy one would need more steps; (ii) one can add a correction to the dynamics \citep{WaTa19Hessian}; (iii) modify $\gamma(t)$.

\paragraph{Corrected dissipation coefficient:} this article experimented with option (iii) with
\begin{equation}
	\gamma=3/t + c t,	\quad\text{where }c\text{ is a small constant;}
	\label{eq_correctedDissipationSchedule}
\end{equation}
see Sec.\ref{sec_SG_experiments}. This choice corresponds to $r(t)=\exp(c t^2/2)t^3$ in the variational formulation. Formally, it leads to 0 temperature, but in practice early stopping is needed because any finite $h$ cannot properly numerical-integrate the dynamics when $\gamma$ becomes sufficiently large.

The reason for choosing the specific linear form of the correction $+ct$ is in Appendix.

\section{Experiments}
\label{sec_experiments}

\subsection{Leading Eigenvalue Problems}
\subsubsection{Bounded Spectrum}
\label{sec_EV_bounded}
We first test the proposed methods on a synthetic problem: finding the $l$ largest eigenvalues of $A=(\Xi+\Xi^T)/2/\sqrt{n}$, where $\Xi$ is a sample of an $n$-by-$n$ matrix with i.i.d. standard normal elements. The scaling of $1/\sqrt{n}$ ensures\footnote{For more precise statement and justification, see random matrix theory for Gaussian Orthogonal Ensemble (GOE), or more generally Wigner matrix \cite{wigner1958distribution}} the leading eigenvalues are bounded by a constant independent of $n$; for an unbounded case, see the next example.

\begin{figure}[ht]
\vspace{-3ex}
\hspace{-20pt}
\includegraphics[width=1.16\columnwidth]{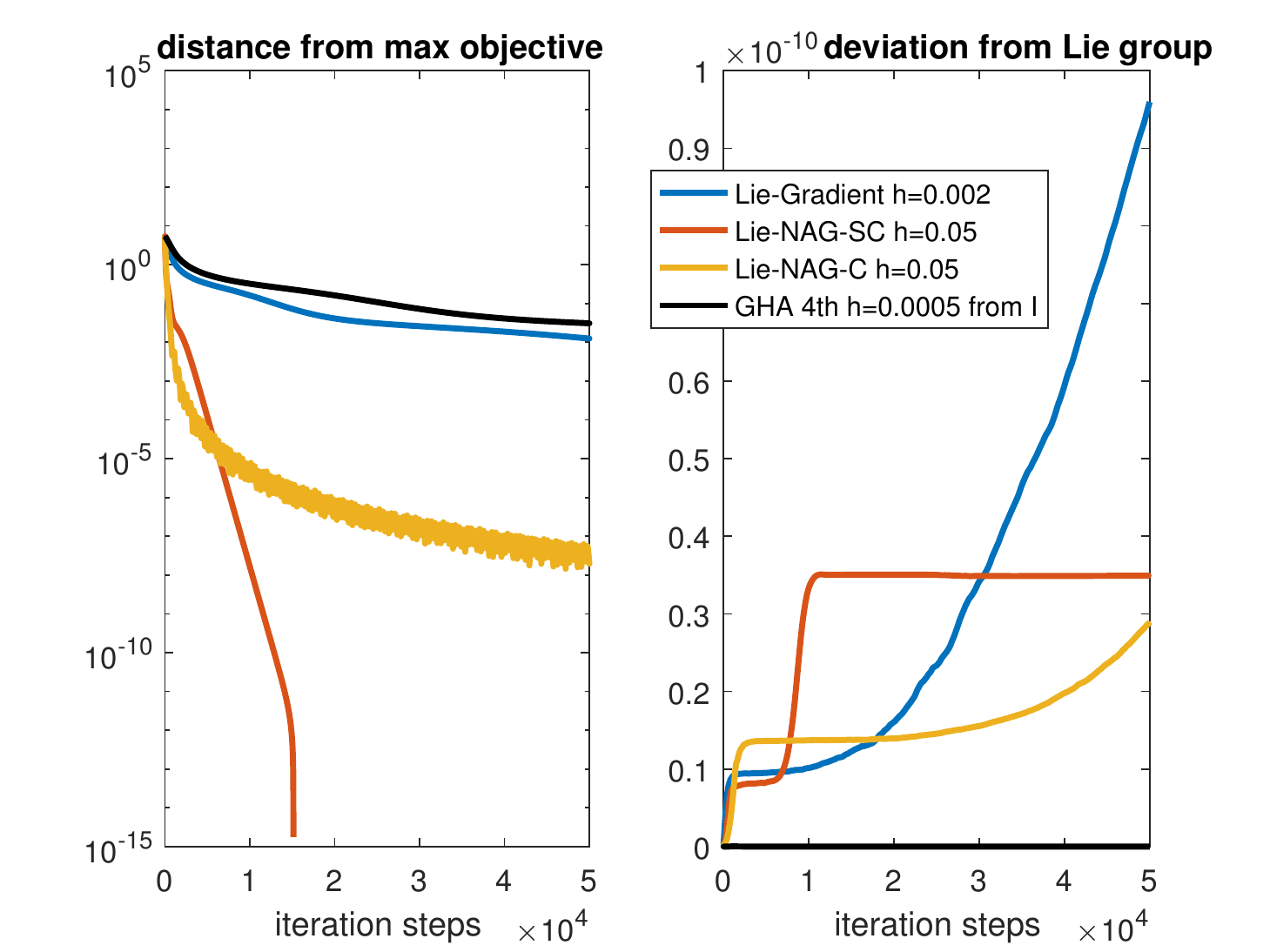}
\hspace{-20pt}
\vspace{-4ex}
\caption{Performances of proposed Lie-GD, Lie-NAG-C and Lie-NAG-SC, compared with GHA, for computing the leading $l=2$ eigenvalues of scaled GOE. All algorithms use step sizes tuned to minimize error in $5\times 10^4$ iterations (although the proposed methods do not need much tuning), and identity initial condition. GHA was based on Runge-Kutta-4 integration of $\dot{Q}=(I-QQ^T)A Q$ for accuracy, and an Euler integration did not result in any notable error reduction. NAG-SC uses friction coefficient untuned $\gamma=1$. The deviations of Lie-NAGs and Lie-GD from the Lie group are machine/platform (MATLAB) precision artifacts.}
\label{fig_largestKevalues_differentA_1b_n500_new}
\vspace{-3.5ex}
\end{figure}

Fig.~\ref{fig_largestKevalues_differentA_1b_n500_new} shows results for a generic sample of 500-dimensional $A$. The proposed Lie-NAG's, i.e. variational methods with momentum, converge significantly faster than the popular GHA. This advantage is even more significant in higher dimensions (see Fig.~\ref{fig_largestKevalues_differentA_1b_n2000} in Appendix). Note Fig.~\ref{fig_largestKevalues_differentA_1b_n500_new} plots accuracy as a function of the number of iterations, and readers interested in accuracy as a function of wallclock are referred to Fig.~\ref{fig_largestKevalues_differentA_1b_n500_new_wallClock} (note wallclock count is platform dependent and therefore the latter illustration is only qualitative but not quantitative, thus placed in the Appendix). In any case, for this problem at least, if low-moderate accuracy is desired, Lie-NAG-C is the most efficient among tested methods; if high accuracy is desired instead, Lie-NAG-SC is the optimal choice.

Note the fact that $A$ has both positive and negative eigenvalues should not impair the credibility of this demonstration. This is because one can shift $A$ to make it positive definite or negative definite, and the convergences will be precisely the same. See Rmk.\ref{rmk_shiftInvariance}.

\subsubsection{Unbounded Spectrum}
Now consider computing the leading eigenvalues of $A=-\Xi \Xi^T/2$ ($\Xi$ similarly defined as in Sec.\ref{sec_EV_bounded}). This is equivalent to finding the $l$ smallest eigenvalues of $\Xi \Xi^T/2$. Doing so is relevant, for instance, in graph theory, where the 2nd smallest eigenvalue of graph Laplacian is the algebraic connectivity of the graph \citep{fiedler1973algebraic, von2007tutorial}.

\begin{figure}[ht]
\vspace{-2.5ex}
\hspace{-20pt}
\includegraphics[width=1.16\columnwidth]{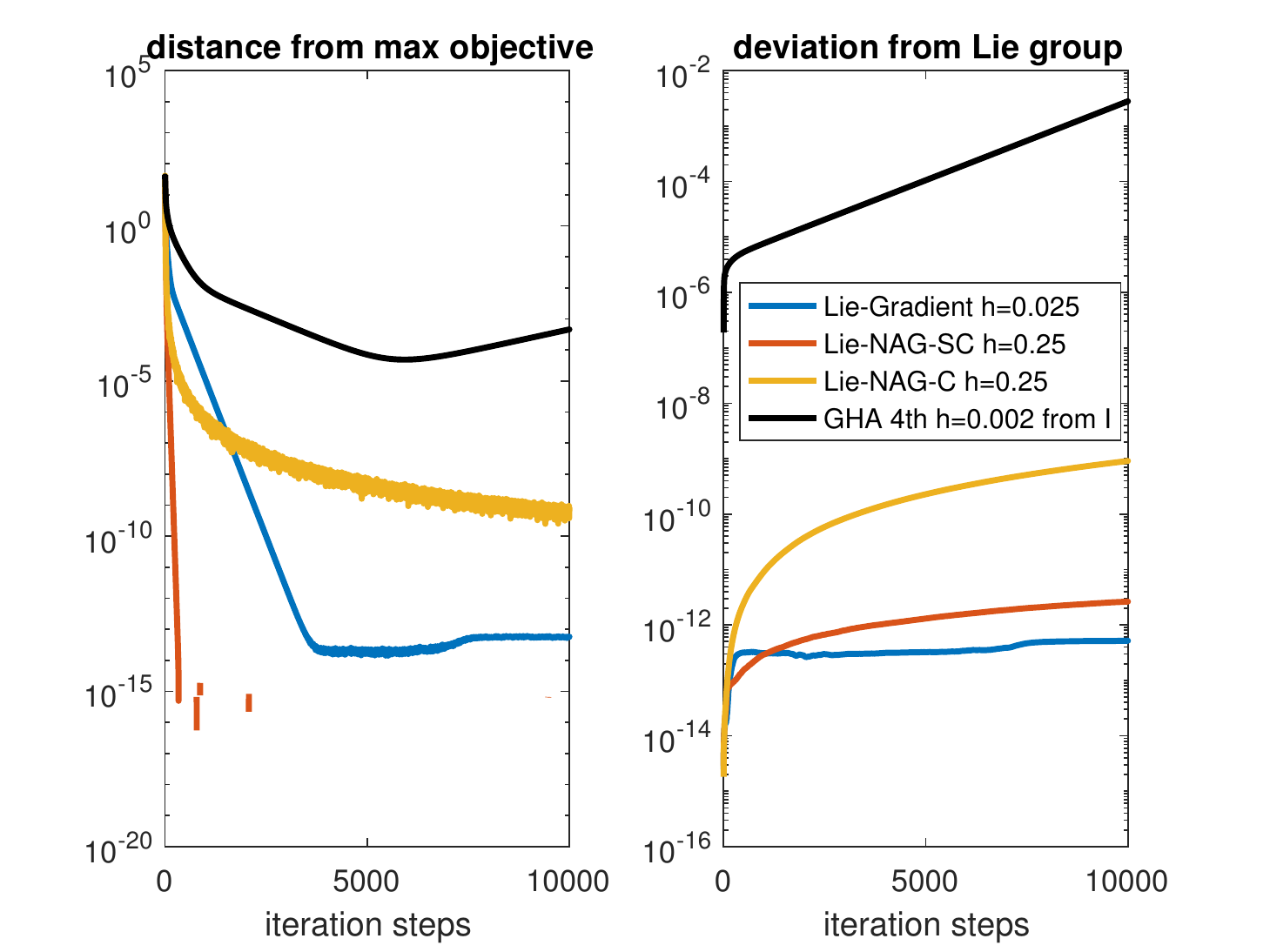}
\hspace{-20pt}
\vspace{-4ex}
\caption{Proposed Lie-GD, Lie-NAG-C and Lie-NAG-SC, compared with GHA, for computing the leading $l=2$ eigenvalues of $A=-\Xi \Xi^T/2$. $\Xi$ is 25-dimensional. Other descriptions are same as in Fig.\ref{fig_largestKevalues_differentA_1b_n500_new}.}
\label{fig_largestKevalues_differentA_4a_n25}
\vspace{-3.5ex}
\end{figure}

Fig.\ref{fig_largestKevalues_differentA_4a_n25} shows the advantage of variational methods (i.e., with momentum), even when the dimension is relatively low $n=25$. $A$ is defined such that its spectrum grows linearly with $n$, and GHA thus needs to use tiny timesteps. Although the proposed methods also need to use reduced step sizes for bigger $n$, the rate of reduction is much slower than that for GHA (results omitted).

\subsection{Stochastic Leading Eigenvalue Problems}
\label{sec_SG_experiments}
To investigate the efficacy of the proposed methods in the stochastic setup (Sec.\ref{sec_method_SG}), we take the same $A$ from Sec.\ref{sec_EV_bounded}, and add $K=100$ random perturbations to it to form a batch $A_1,\cdots,A_K$. Each random perturbation is $(\Xi+\Xi^T)/4/\sqrt{n}$ for i.i.d. $\Xi$; note these are large fluctuations when compared to $A$. Then $A$ is refreshed to be the mean of $A_k$'s, whose leading $l=2$ eigenvalues are accurately computed as the ground truth.

\begin{figure}[ht]
\vspace{-3ex}
\hspace{-20pt}
\includegraphics[width=1.16\columnwidth]{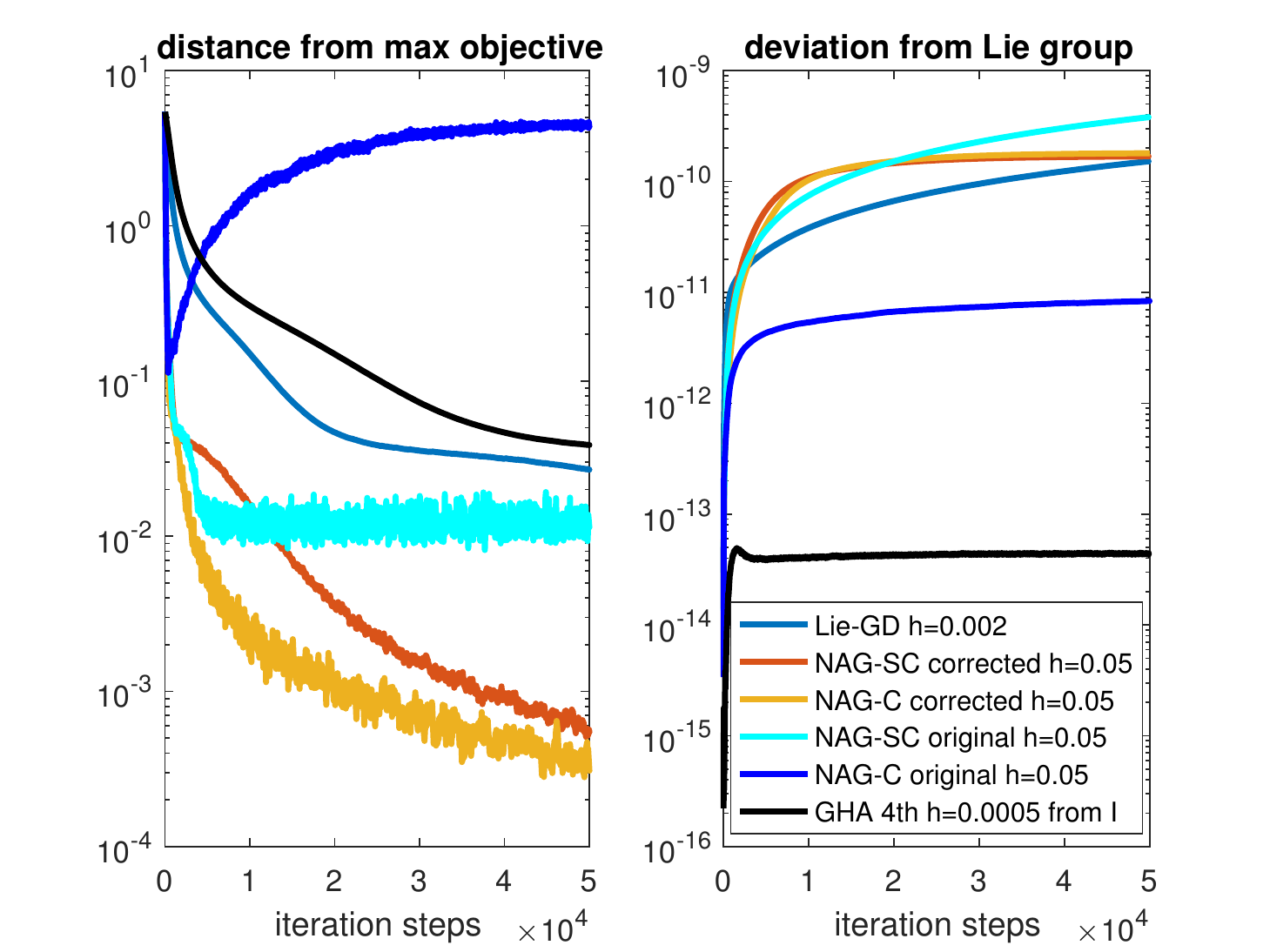}
\hspace{-20pt}
\vspace{-4ex}
\caption{The computation of leading $l=2$ eigenvalues of $A=\frac{1}{K}\sum_{k=1}^K A_k$ based on stochastic gradients from batch $A_1,\cdots,A_K$ without $A$. NAG-SC corrected, NAG-C corrected, NAG-SC and NAG-C use, respectively, $\gamma=1+0.01t$, $3/t+0.01t$, $1$, and $3/t$. Other descriptions are same as in Fig.\ref{fig_largestKevalues_differentA_1b_n500_new}.}
\label{fig_largestKevalues_differentAstochastic_n500new}
\vspace{-2ex}
\end{figure}

Fig.\ref{fig_largestKevalues_differentAstochastic_n500new} shows the advantage of variational methods, even though their larger step sizes lead to much higher variances of the stochastic gradient approximation. The corrected dissipation \eqref{eq_correctedDissipationSchedule} enabled the convergence of NAG-C. The same correction slows down the convergence of NAG-SC in the beginning, but significantly improves its long time performance, which otherwise stagnates at small but not infinitesimal error.

The reason NAG-SC-original stagnates is, over long time, it samples from an invariant distribution at a nonzero temperature. This invariant distribution, however, is not the exact one of the continuous limit; the latter of which would concentrate around the minimizer with 0 error. Instead, the numerical method's invariant distribution, if existent, is $\mathcal{O}(h^p)$ away from the exact one \citep{bou2010long, abdulle2014high} under suitable assumptions, which means as the numerical method converges, it gives $R$'s that are $\mathcal{O}(h^p)$ away from the exact minimizer with high probability. NAG-SC-corrected alleviated this issue.

\subsection{Leading Generalized Eigenvalue: a Demonstration Based on LDA}
\label{sec_LDA}
We report numerical experiments on multiclass Fisher Linear Discriminant Analysis (LDA) of the hand-written-digits database MNIST \citep{lecun1998gradient}. Since it is known that LDA can be formulated as a leading generalized eigenvalue problem (e.g., reviewed in  \cite{li2006using,welling2005fisher}; see appendix for a summary), we use it as an example to test our leading GEV algorithm. Important to note is, our purpose is NOT to construct an algorithm for MNIST classification, as it is known that LDA does not achieve state-of-art performance in that regard (test error based on exact leading GEV solution was $~10\%$ in our experiment). Instead, we simply would like to quantify the efficacy of our algorithm applied to a leading generalized eigenvalue problem based on real life data.

The 60000 training data of MNIST were employed to compute the `inter-class scatter matrix' A and the `intra-class scatter matrix' B (see appendix for more details). Each 28$\times$28 image had its white margins cropped, resulting in a 400-dimensional vector, and thus $A$ and $B$ are both 400-by-400, respectively positive semi-definite and positive definite. Furthermore, to avoid laborious tuning of timestep sizes, both $A$ and $B$ are normalized by their respective 2-norm; this is without loss of generality, because $\arg\min_Q \frac{\det(Q^T A Q)}{\det(Q^T B Q)}$ is invariant to scaling of $A$ and/or $B$. Since there are 10 classes, $l=9$ is chosen.

Note this is a positive semi-definite problem by construction. Some generalized eigenvalue methods require or prefer such a property (e.g., Oja flow \citep{yan1994global}), but the proposed algorithms are indifferent to the definiteness (see Rmk.\ref{rmk_shiftInvariance}).

\begin{figure}[ht]
\vspace{-2.5ex}
\hspace{-20pt}
\includegraphics[width=1.16\columnwidth]{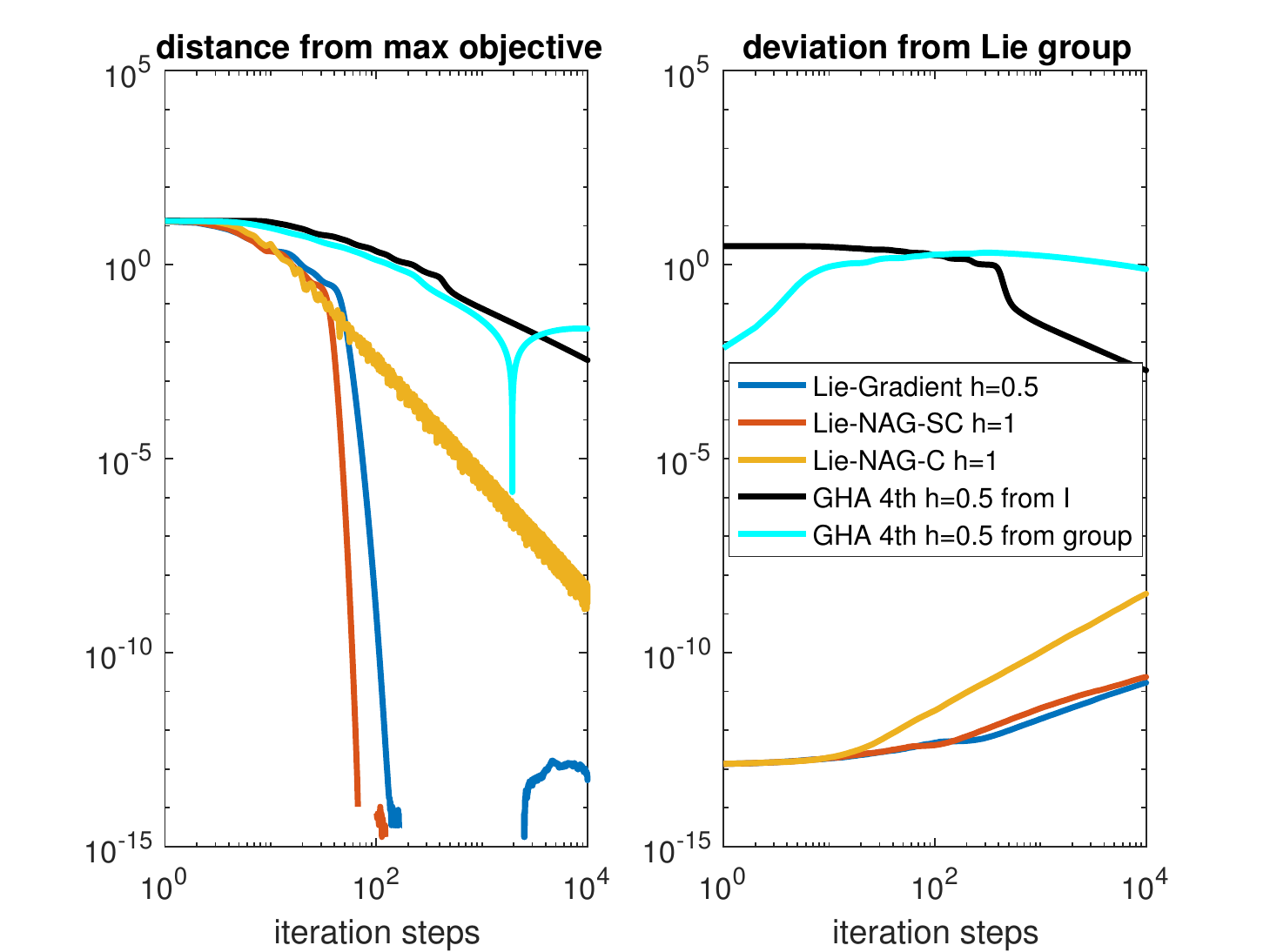}
\hspace{-20pt}
\vspace{-4ex}
\caption{Lie-GD, Lie-NAG-C, Lie-NAG-SC, and GHA, for computing the leading $l$ generalized eigenvalues associated with LDA for the MNIST dataset. All algorithms use step sizes tuned to minimize error in $10^4$ iterations (although the proposed methods do not need much tuning). Two GHA runs use two initial conditions, $Q(0)=I$ which is not on the Lie group $Q^T B Q=I$, and $Q(0)$ being the first $l$ columns of $L^{-1}$ which is on the Lie group; all others use initial condition $L^{-1}$. GHA was based on Runge-Kutta-4 integration of $\dot{Q}=(I-BQQ^T)A Q$ for accuracy, and an Euler integration did not result in any notable error reduction. NAG-SC uses friction coefficient untuned $\gamma=1$. The pollution of NAG simulations near the end is a machine precision artifact, and so are the deviations of Lie-NAGs and Lie-GD from the Lie group.}
\label{fig_largestKgeneralizedEvalues_MNIST_k9}
\vspace{-2ex}
\end{figure}

Fig.\ref{fig_largestKgeneralizedEvalues_MNIST_k9} shows that all proposed methods converge significantly faster than GHA. Interestingly, although Lie-NAG-SC still converges faster than Lie-GD, the acceleration due to momentum is not as drastic as before.

In addition, Fig.\ref{fig_largestKgeneralizedEvalues_MNIST_k9_noEigGap} in Appendix shows that our methods do not require an eigengap, and thus are widely applicable. Great methods have been continuously proposed for GEV; for instance, a globally linear convergent algorithm was recently proposed based on power method \citep{ge2016efficient}, but its convergence is affected by eigengap. The proposed methods do not have this restriction.

\section*{Acknowledgements}
The authors thank Tuo Zhao and Justin Romberg for insightful discussions. Generous support from NSF DMS-1521667, DMS-1847802 and ECCS-1936776 (MT) and CMMI-1824798 (TO) are acknowledged.

\clearpage

\bibliography{LieGroupOptimization}

\clearpage

\section*{Appendix}

\subsection*{Justification of NAG dynamics for GEV}
This section justifies why one can simply use the same NAG flow for eigenvalue problem and only modify $R$'s initial condition. It is rigorous when $B$ is positive definite, since its Cholesky decomposition will be used; otherwise, the justification is formal, and the same NAG dynamics is still well defined.

First, rewrite \eqref{eq_GEV} as
\begin{align*}
	\max_{R\in \mathbb{R}^{n\times n}} & \text{tr}(E^T R^T A R E) \nonumber\\
	\text{s.t.}\quad & R^T B R = I_{n\times n} .
\end{align*}
Cholesky decompose $B$ as $B=L^T L$, let $Q=LR$ and $\hat{A}=L^{-T} A L$, then the GEV is equivalently
\begin{align*}
	\max_{Q\in \mathbb{R}^{n\times n}} & \text{tr}(Q^T \hat{A} Q \mathcal{E}) \nonumber\\
	\text{s.t.}\quad & Q^T Q = I_{n\times n} .
\end{align*}
One can write down the NAG dynamics for variationally optimizing this problem:
\begin{equation*}
  \dot{Q} = Q \xi,
  \qquad
    \dot{\xi}
  = -\gamma(t) \xi
    + [Q^T \hat{A} Q, \mathcal{E}]
\end{equation*}
Note this is
\begin{equation*}
  L\dot{R} = L R \xi,
  \qquad
    \dot{\xi}
  = -\gamma \xi
    + [R^T L^T L^{-T} A L^{-1} L R, \mathcal{E}],
\end{equation*}
and all $L$'s can be canceled, leading to \eqref{eq_LieNAG_conti}.

In terms of initial condition, since $Q(0)^T Q(0)=I$, $R(0)^T L^T L R(0) = R(0)^T B R(0) = I$. $\xi(0)$ needs to be skew-symmetric throughout.

\subsection*{Preservation of Lie group structure}
(This section explicitly demonstrates several facts of geometric mechanics; for more information about geometric mechanics less in coordinates, see e.g., \cite{marsden2013introduction,hoscst2009}.)

For continuous dynamics, we have

\begin{Theorem}
	Consider $\dot{R}(t)=R(t) F(t)$ where $R$ and $F$ are $n$-by-$n$ matrices. If $R(t_0)^T B R(t_0)=I$ and $F(t)$ is skew-symmetric for all $t\geq t_0$, then $R(t)^T B R(t)=I$, $\forall t\geq t_0$.
\end{Theorem}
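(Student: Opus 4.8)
The plan is to monitor the matrix-valued quantity $P(t) \defeq R(t)^{T} B R(t)$ and to show it is constant. First I would differentiate it along the flow: using the equation of motion $\dot{R} = R F$ together with $B = B^{T}$,
\[
  \dot{P}
  = \dot{R}^{T} B R + R^{T} B \dot{R}
  = F^{T} R^{T} B R + R^{T} B R F
  = F^{T} P + P F .
\]
Invoking the hypothesis that $F(t)^{T} = -F(t)$ for all $t \ge t_{0}$, this collapses to the linear, homogeneous, time-dependent matrix ODE $\dot{P} = P F - F P = [P, F]$.

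The second step is to observe that the constant curve $t \mapsto I$ solves this ODE (since $[I, F] = 0$) and meets the prescribed initial value $P(t_{0}) = R(t_{0})^{T} B R(t_{0}) = I$. Because the right-hand side $P \mapsto [P, F(t)]$ is linear in $P$, hence Lipschitz in $P$ uniformly on each compact time interval with constant controlled by $\sup\norm{F}$ there, the standard existence--uniqueness theorem applies and the solution through $(t_{0}, I)$ is unique. Therefore $P(t) = I$, i.e.\ $R(t)^{T} B R(t) = I$, for all $t \ge t_{0}$.

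I do not expect a genuine obstacle: the only points needing care are the use of $B = B^{T}$ (so that the product rule assembles cleanly into $F^{T}P + PF$) and the regularity of $F$, which is automatic since $F$ is built from the smooth Lie-NAG data. If one prefers an argument that does not quote uniqueness as a black box, set $Q \defeq P - I$, so that $\dot{Q} = [Q, F]$ and $Q(t_{0}) = 0$; then $\od{}{t}\ip{Q}{Q} = 2\ip{Q}{[Q,F]} \le c(t)\,\ip{Q}{Q}$ with $c(t) \defeq 4\norm{F(t)}$, and Gr\"onwall's inequality forces $Q \equiv 0$. Specializing this computation to the GEV dynamics \eqref{eq_avikuh31o87th4o81ho36515} with $F = \xi$ gives the first half of Theorem~\ref{eq_NAGstaysOnLieGroup}; the second half, skew-symmetry of $\xi(t)$, is preserved because $\dot\xi = -\gamma\xi + [R^{T}AR,\mathcal{E}]$ has a skew-symmetric right-hand side whenever $\xi$ is skew-symmetric (a commutator of symmetric matrices is skew), so the same Gr\"onwall/uniqueness reasoning applies to $\xi + \xi^{T}$.
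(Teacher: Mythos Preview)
Your proof is correct and follows essentially the same route as the paper: differentiate $P=R^{T}BR$, obtain $\dot P = F^{T}P + PF$, and conclude $P\equiv I$. The paper compresses your uniqueness/Gr\"onwall step into the single substitution $R^{T}BR\mapsto I$ (tacitly the same argument), and note that the symmetry of $B$ is in fact not needed for the product-rule computation---$\dot P = F^{T}P+PF$ holds regardless.
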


\begin{proof}
	\begin{align*}
		& ~\frac{d}{dt} (R^T B R) = \dot{R}^T B R + R^T B \dot{R} \\
		&= F^T R^T B R + R^T B R F = F^T + F = 0.\qedhere
	\end{align*}
\end{proof}

\begin{Corollary}
	We thus have Theorem \ref{eq_NAGstaysOnLieGroup}.
\end{Corollary}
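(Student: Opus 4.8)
The plan is to establish the two claimed invariants in sequence, the constraint $R^{T}BR=I$ ultimately following from the continuous-dynamics Theorem just proved (with $F=\xi$), so the real work is showing that $\xi$ stays skew-symmetric. For that, I would set $S(t) \defeq \xi(t) + \xi(t)^{T}$, the doubled symmetric part of $\xi$, and differentiate using the $\xi$-equation in \eqref{eq_avikuh31o87th4o81ho36515}: $\dot S = -\gamma(t)\,S + [R^{T}AR,\mathcal{E}] + [R^{T}AR,\mathcal{E}]^{T}$. The key observation is that the force term is skew-symmetric for \emph{every} $R$, not just for $R$ on the Lie group: $A$ symmetric makes $R^{T}AR$ symmetric, $\mathcal{E}=EE^{T}$ is symmetric, and a commutator of two symmetric matrices is skew-symmetric, since $(PQ-QP)^{T}=QP-PQ=-[P,Q]$. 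Hence $[R^{T}AR,\mathcal{E}]^{T}=-[R^{T}AR,\mathcal{E}]$, the last two terms cancel, the $R$-dependence disappears, and $S$ obeys the scalar-coefficient linear ODE $\dot S = -\gamma(t)S$.

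Since the consistent initial condition gives $S(0)=\xi(0)+\xi(0)^{T}=0$, uniqueness for this linear ODE yields $S(t)\equiv 0$, i.e. $\xi(t)^{T}=-\xi(t)$ for all $t$ in the interval of existence. With $\xi$ now known to be skew-symmetric throughout, the reconstruction equation $\dot R = R\xi$ is exactly of the form $\dot R = R F$ with $F=\xi$ skew-symmetric and $R(0)^{T}BR(0)=I$ given, so the continuous-dynamics Theorem above applies verbatim and gives $R(t)^{T}BR(t)=I$ for all $t$. Together these two facts are precisely the assertion of Theorem~\ref{eq_NAGstaysOnLieGroup}.

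I do not expect a genuine obstacle here: once one notices that the commutator $[R^{T}AR,\mathcal{E}]$ of two symmetric matrices is automatically skew-symmetric, the evolution of the symmetric part of $\xi$ decouples entirely from $R$ and reduces to a trivially integrated homogeneous linear ODE, so the whole proof is a two-line differentiation plus the already-proved Theorem. A softer alternative is to note that the subspace of skew-symmetric matrices is invariant under the $\xi$-vector field and invoke uniqueness of ODE solutions, but tracking $S$ explicitly is cleaner and makes transparent why only the initial conditions, and nothing in the dynamics, needed to change when passing from the plain eigenvalue problem to GEV.
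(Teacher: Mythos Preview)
Your proof is correct and rests on exactly the same key observation as the paper's: the force $[R^{T}AR,\mathcal{E}]$ is a commutator of two symmetric matrices and hence skew-symmetric, so the skew-symmetry of $\xi$ persists and the previous Theorem applies with $F=\xi$. The only cosmetic difference is packaging---the paper writes out the integrating-factor formula $\xi(t)=e^{-\Gamma(t)}\bigl(\xi(0)+\int_0^t e^{\Gamma(s)}[R(s)^{T}AR(s),\mathcal{E}]\,ds\bigr)$ and checks each term is skew, whereas you track $S=\xi+\xi^{T}$ and solve $\dot S=-\gamma(t)S$ with $S(0)=0$; these are equivalent.
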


\begin{proof}
	We only need to show $F:=\xi(t)$ remains skew-symmetric. This is true because
\begin{equation*}
\xi(t) = e^{-\Gamma(t)} \left( \xi(0) + \int_0^t e^{\Gamma(s)} [R(s)^T A R(s), \mathcal{E}] ds \right),
\end{equation*}
where $\Gamma(t) \defeq \int_0^t \gamma(s) ds$ is a scalar.
However, $\xi(0)$ is skew-symmetric by assumption, and so is the integrand because
\begin{gather*}
  [R(s)^T A R(s),\mathcal{E}]^T = [\mathcal{E}^T, (R(s)^T A R(s))^T] \\
  = [\mathcal{E}, R(s)^T A R(s)] = -[R(s)^T A R(s),\mathcal{E}]. \qedhere
\end{gather*}
\end{proof}

\begin{Corollary}
	Lie-GD $\dot{R}=R[R^T A R,\mathcal{E}]$ also maintains $R^T B R=I$.
\end{Corollary}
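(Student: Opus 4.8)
The plan is to recognize the Lie-GD equation $\dot R = R\,[R^{T}AR,\mathcal{E}]$ as an instance of the matrix ODE $\dot R(t)=R(t)F(t)$ treated in the continuous Theorem above, with the choice $F(t)\defeq[R(t)^{T}AR(t),\mathcal{E}]$, and then to verify the two hypotheses of that theorem. The initial-condition hypothesis $R(t_{0})^{T}BR(t_{0})=I$ is exactly the consistency requirement attached to \eqref{eq_LieGD_conti}, so nothing is needed there. The only real content is the skew-symmetry of $F(t)$ for all $t$.

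First I would check skew-symmetry of $F(t)$. This does \emph{not} require $R$ to lie on the group; it follows purely from $A=A^{T}$ and $\mathcal{E}=\mathcal{E}^{T}$. Indeed $R^{T}AR$ is symmetric for every $R$, and the commutator of two symmetric matrices is skew-symmetric, which is precisely the computation
\begin{equation*}
  [R^{T}AR,\mathcal{E}]^{T}=[\mathcal{E}^{T},(R^{T}AR)^{T}]=[\mathcal{E},R^{T}AR]=-[R^{T}AR,\mathcal{E}]
\end{equation*}
already carried out in the proof of the Corollary that gives Theorem~\ref{eq_NAGstaysOnLieGroup}. Hence $F(t)^{T}=-F(t)$ for all $t\ge t_{0}$ along any trajectory.

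Then I would simply invoke the continuous Theorem: with $R(t_{0})^{T}BR(t_{0})=I$ and $F$ skew-symmetric for all $t\ge t_{0}$, it yields $R(t)^{T}BR(t)=I$ for all $t\ge t_{0}$, which is the claim. (If one wants to be fully careful, one should note that the right-hand side of the Lie-GD ODE is polynomial in the entries of $R$, hence smooth, so a unique solution flow exists locally and the ODE-level argument $\tfrac{d}{dt}(R^{T}BR)=F^{T}(R^{T}BR)+(R^{T}BR)F$ is legitimate; restricted to the invariant affine condition it gives $\tfrac{d}{dt}(R^{T}BR)=F^{T}+F=0$.)

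I do not expect any genuine obstacle here: this is a direct corollary of the already-proved continuous Theorem, and the skew-symmetry input is the same elementary identity used just above for the Lie-NAG case. The only thing to be mildly attentive about is that the skew-symmetry argument here is even simpler than in the NAG case — there is no need for the integrating-factor representation of $\xi(t)$, since $F$ is given directly as a commutator of symmetric matrices rather than being governed by its own differential equation.
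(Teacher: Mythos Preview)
Your proposal is correct and is exactly the argument the paper intends: the Corollary is stated without proof because it follows immediately from the continuous Theorem by taking $F(t)=[R(t)^{T}AR(t),\mathcal{E}]$, whose skew-symmetry is the same commutator identity $[R^{T}AR,\mathcal{E}]^{T}=-[R^{T}AR,\mathcal{E}]$ already displayed in the preceding Corollary's proof. Your additional remark that no integrating-factor representation is needed here (since $F$ is directly a commutator of symmetric matrices rather than the solution of an ODE) is also on point.
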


For discrete timesteppings, we have

\begin{Theorem}
	Define Cayley transformation as $\text{Cayley}(\xi):=(I-\xi/2)^{-1} (I+\xi/2)$.
	Consider $\dot{R}(t)=R(t)F(t)$ where $R$ and $F$ are $n$-by-$n$ matrices. If $R(t_0)^T B R(t_0)=I$ and $F(t_0)$ is skew-symmetric, then the discrete updates given by $\hat{R}=R(t_0) \exp(F(t_0) h)$ and $\hat{R}=R(t_0) \text{Cayley}(F(t_0) h)$ both satisfy $\hat{R}^T B \hat{R}=I$. 
	\label{thm_LieGroupPreservation_discrete}
\end{Theorem}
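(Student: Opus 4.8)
The plan is to strip the problem down to a statement about the right-multiplier alone. Write $M$ for whichever of $\exp(F(t_0)h)$ or $\text{Cayley}(F(t_0)h)$ is under consideration, so that $\hat R = R(t_0)\,M$. Then, using the hypothesis $R(t_0)^T B R(t_0) = I$,
\begin{equation*}
\hat R^T B \hat R = M^T \bigl( R(t_0)^T B R(t_0) \bigr) M = M^T M,
\end{equation*}
so it suffices to show $M^T M = I$, i.e.\ that $M \in \Orth(n)$. (Note the differential equation $\dot R = R F$ itself is not used; only the one discrete step matters, and $B$ enters only through the initial condition.) Set $G \defeq F(t_0)h$; since $h$ is a scalar and $F(t_0)$ is skew-symmetric, $G^T = -G$.

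For the exponential map this is immediate: transposition commutes with the matrix exponential, so $M^T = \exp(G^T) = \exp(-G) = \exp(G)^{-1} = M^{-1}$, whence $M^T M = I$. For the Cayley case, first I would record that $I - G/2$ and $I + G/2$ are invertible, because the eigenvalues of the skew-symmetric matrix $G$ are purely imaginary and hence $1 \pm \lambda/2 \ne 0$ for every eigenvalue $\lambda$; this is what makes $\text{Cayley}(G)$ well defined. Taking transposes of $M = (I - G/2)^{-1}(I + G/2)$ and using $G^T = -G$ gives $M^T = (I - G/2)(I + G/2)^{-1}$. Since $I - G/2$, $I + G/2$ and their inverses are all (rational) functions of the single matrix $G$, they commute pairwise, so
\begin{equation*}
M^T M = (I - G/2)(I + G/2)\bigl[(I + G/2)(I - G/2)\bigr]^{-1} = (I - G^2/4)(I - G^2/4)^{-1} = I .
\end{equation*}

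There is no real obstacle here beyond the two small bookkeeping points just mentioned — the invertibility of $I \pm G/2$, and keeping track of which matrices commute — and this is exactly why the result holds verbatim for any symmetric positive definite $B$ rather than only for $B = I$: the matrix $B$ is untouched by the computation, which is purely about orthogonality of $M$. The same argument, of course, shows why replacing Cayley by any higher-order approximation of $\exp$ that maps skew-symmetric matrices to orthogonal ones (e.g.\ higher Pad\'e approximants) would preserve the Lie-group constraint as well.
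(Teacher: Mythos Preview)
Your proof is correct and follows essentially the same approach as the paper: both reduce the claim to showing that the right-multiplier $M$ (the paper calls it $Q$) is orthogonal, handle the exponential via $\exp(G)^{T}=\exp(-G)$, and handle the Cayley case by transposing and invoking the commutativity of $I\pm G/2$. Your version adds the small but welcome detail that $I\pm G/2$ are invertible because a real skew-symmetric matrix has purely imaginary spectrum, which the paper leaves implicit.
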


\begin{proof}
	Consider $\hat{R}=R Q$. If $Q^T Q=I$, then
	\[
		\hat{R}^T B \hat{R} = Q^T R^T B R Q = Q^T Q = I.
	\]
	$Q=\exp(F h)$ for skew-symmetric $F$ satisfies this condition because
	\[
		Q^T Q = \exp(F^T h) \exp(F h) = \exp(-F h)\exp(Fh) = I.
	\]
	$Q=\text{Cayley}(F h)$ for skew-symmetric $F$ satisfies this condition because
	{\small
	\begin{align*}
		&Q^T Q = (I+F h/2)^T (I-F h/2)^{-T} (I-F h/2)^{-1} (I+F h/2) \\
		&= (I-F h/2) (I+Fh/2)^{-1} (I-F h/2)^{-1} (I+F h/2) = I
	\end{align*}
	}
	the last equality because $I-Fh/2$ and $I+Fh/2$ commute.
\end{proof}

\subsection*{A brief recap of GHA}
(This subsection is not new research but for the self-containment of the article.)

Oja flow / Sanger's rule / Generalized Hebbian Algorithm (e.g., \cite{oja_simplified_1982, sanger_optimal_1989, gorrell_generalized_nodate, wei-yong_yan_global_1994}) is a celebrated type of methods based on continuous dynamics for finding leading eigenvalues of a symmetric matrix. Only for the reason of a concise presentation, we refer to them as GHA in this article.

GHA works as follows: given $n$-by-$n$ symmetric $A$, to find the eigenspace associated with its largest $l$ eigenvalues, one denotes by $V(t)$ an $n$-by-$l$ matrix and uses the long time limit of dynamics
\[
	\dot{V}=(I-V V^T) AV
\]
as a span of the corresponding orthonormal eigenvectors.

This approach can be extended to GEV \eqref{eq_GEV} by using GHA dynamics
\begin{equation}
	\dot{V}=(I-B V V^T) AV ;
	\label{eq_GHAdynamics}
\end{equation}
see e.g., \cite{chen_constrained_2019} and references therein.

To implement GHA in practice, the continuous dynamics need to be  numerically discretized. A 1st-order discretization is based on Euler scheme, namely
\[
	V_{i+1}=V_i+h(I-B V_i V_i^T) AV_i,
\]
and it is most commonly used. However, if a smaller deviation from the continuous dynamics is desired, a higher-order discretization can also be used, e.g., a 4th-order Runge-Kutta given by
{\small
\begin{align*}
	k_1 &= \left(I-B V_i V_i^T\right) AV_i \\
	k_2 &= \left(I-B \left(V_i+\frac{h}{2} k_1\right) \left(V_i+\frac{h}{2} k_1\right)^T\right) A \left(V_i+\frac{h}{2} k_1\right) \\
	k_3 &= \left(I-B \left(V_i+\frac{h}{2} k_2\right) \left(V_i+\frac{h}{2} k_2\right)^T\right) A \left(V_i+\frac{h}{2} k_2\right) \\
	k_4 &= \left(I-B \left(V_i+h k_3\right) \left(V_i+h k_3\right)^T\right) A \left(V_i+h k_3\right) \\
	V_{i+1} &= V_i + \frac{h}{6} \left(k_1+2k_2+2k_3+k_4\right) .
\end{align*}
}
Roughly 4 times the flops of Euler are needed per step, but the deviation from \eqref{eq_GHAdynamics} is $\mathcal{O}(h^4)$ instead of $\mathcal{O}(h)$ for Euler.

\subsection*{A brief recap of multiclass Fisher Linear Discriminant Analysis (LDA)}
(This subsection is not new research but, for the self-containment of the article, a quick excerpt of the existing methods of Fisher Linear Discriminant Analysis \cite{fisher1936use} and Multiple Discriminant Analysis (e.g., \cite{johnson2002applied}), mainly based on \cite{li2006using}).

Given $d$-by-1 vectorial data $x_i$, $i=1,\cdots,N$ labeled into $M$-classes, define `inter-class scatter matrix' $A$ and `intra-class class scatter matrix' $B$ by
\begin{align*}
	&\mu_m = \frac{1}{|\mathcal{C}_m|} \sum_{i\in \mathcal{C}_m} x_i, \\		&\bar{x} = \frac{1}{N} \sum_{i=1}^N x_i, \\
	& A = \sum_{m=1}^{M} (\mu_m - \bar{x}) (\mu_m - \bar{x})^T, \\
	& B = \sum_{m=1}^{M} \sum_{i\in \mathcal{C}_m} (x_i-\mu_m) (x_i-\mu_m)^T,
\end{align*}
where $\mathcal{C}_m$ is the set of indices corresponding to class-$m$. FDA seeks a projection represented by a $d$-by-$l$ matrix $Q$ that maximizes the Rayleigh quotient:
\[
	\max_Q \frac{\det(Q^T A Q)}{\det(Q^T B Q)},
\]
where a standard choice of $l$ is $l=M-1$. This problem can be reformulated as the generalized eigenvalue problem $A w=\lambda B w$ (e.g., \cite{li2006using,welling2005fisher}), and thus equivalent to
\begin{align*}
	&\max && \text{tr } (Q^T A Q) \\
	&\text{ s.t.} && Q^T B Q=I.
\end{align*}

\subsection*{Additional LDA experimental results}
To demonstrate that the proposed methods still work when there is no eigengap (i.e., two largest eigenvalues being identical), we take $A$ and $B$ from LDA for MNIST, Cholesky decompose $B$ as $B=L^T L$, let $\hat{A}=L^{-T} A L^{-1}$, diagonalize $\hat{A}=VDV^{-1}$, and then replace $D$'s largest diagonal element by the value of the 2nd largest. Denoting the result by $\tilde{D}$, we replace $A$ by $\tilde{A}=L^T V\tilde{D}V^{-1} L$. The generalized eigenvalue problem associated with $\{\tilde{A},B\}$ now has a zero eigengap, which prevents, for example, power-method based approaches from working. However, Fig. \ref{fig_largestKgeneralizedEvalues_MNIST_k9_noEigGap} shows that the proposed methods perform almost identically to the original $\{A,B\}$ case (c.f., Fig. \ref{fig_largestKgeneralizedEvalues_MNIST_k9}).

\begin{figure}[ht]
\hspace{-20pt}
\includegraphics[width=1.16\columnwidth]{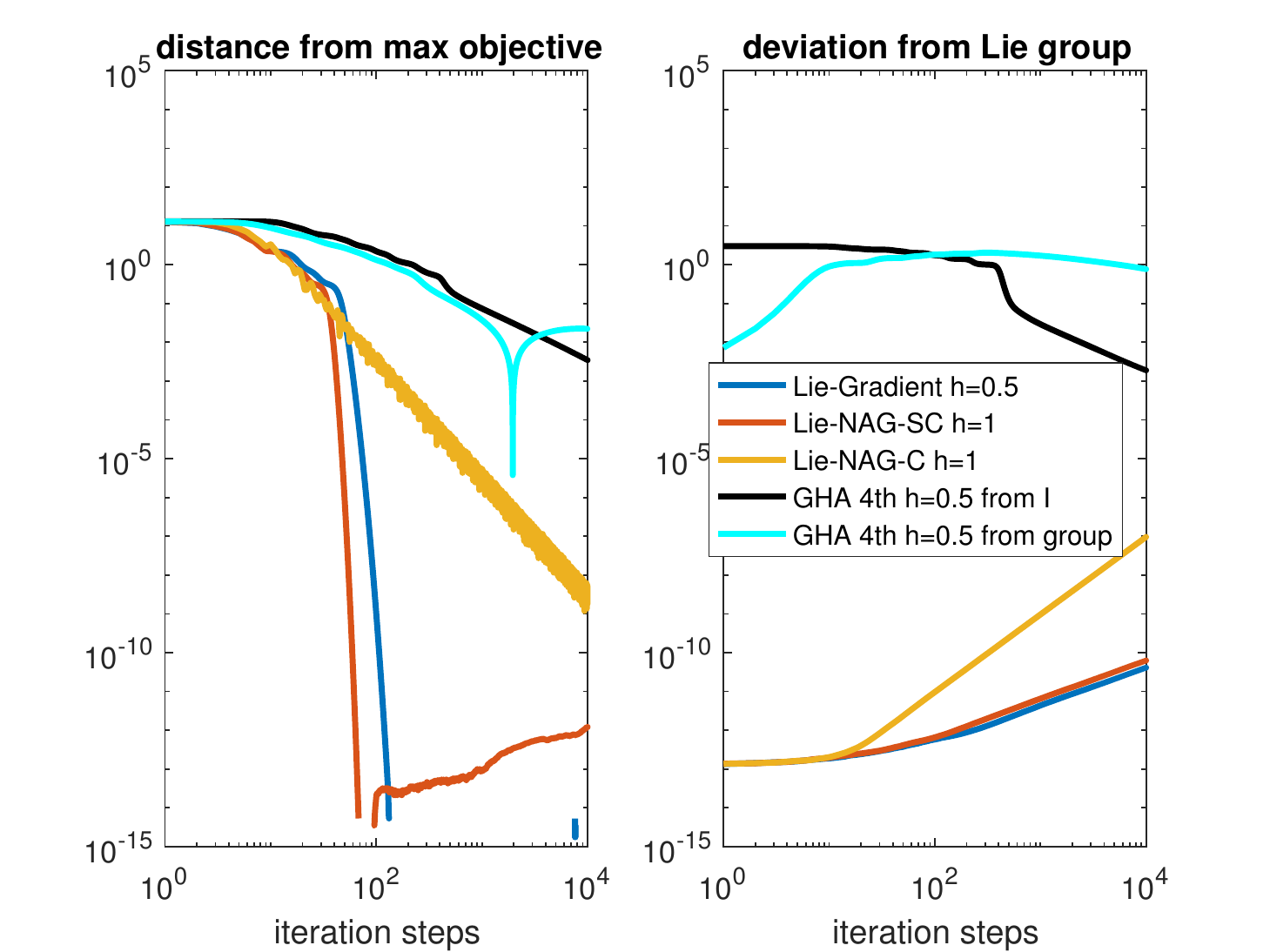}
\hspace{-20pt}
\caption{Same experiment as in Fig.\ref{fig_largestKgeneralizedEvalues_MNIST_k9} for modified MNIST with 0 eigengap.}
\label{fig_largestKgeneralizedEvalues_MNIST_k9_noEigGap}
\end{figure}

\subsection*{$l$ largest eigenvalues of $A=(\Xi+\Xi^T)/2/\sqrt{n}$: $n=2000$ result}
\begin{figure}[ht]
\hspace{-20pt}
\includegraphics[width=1.16\columnwidth]{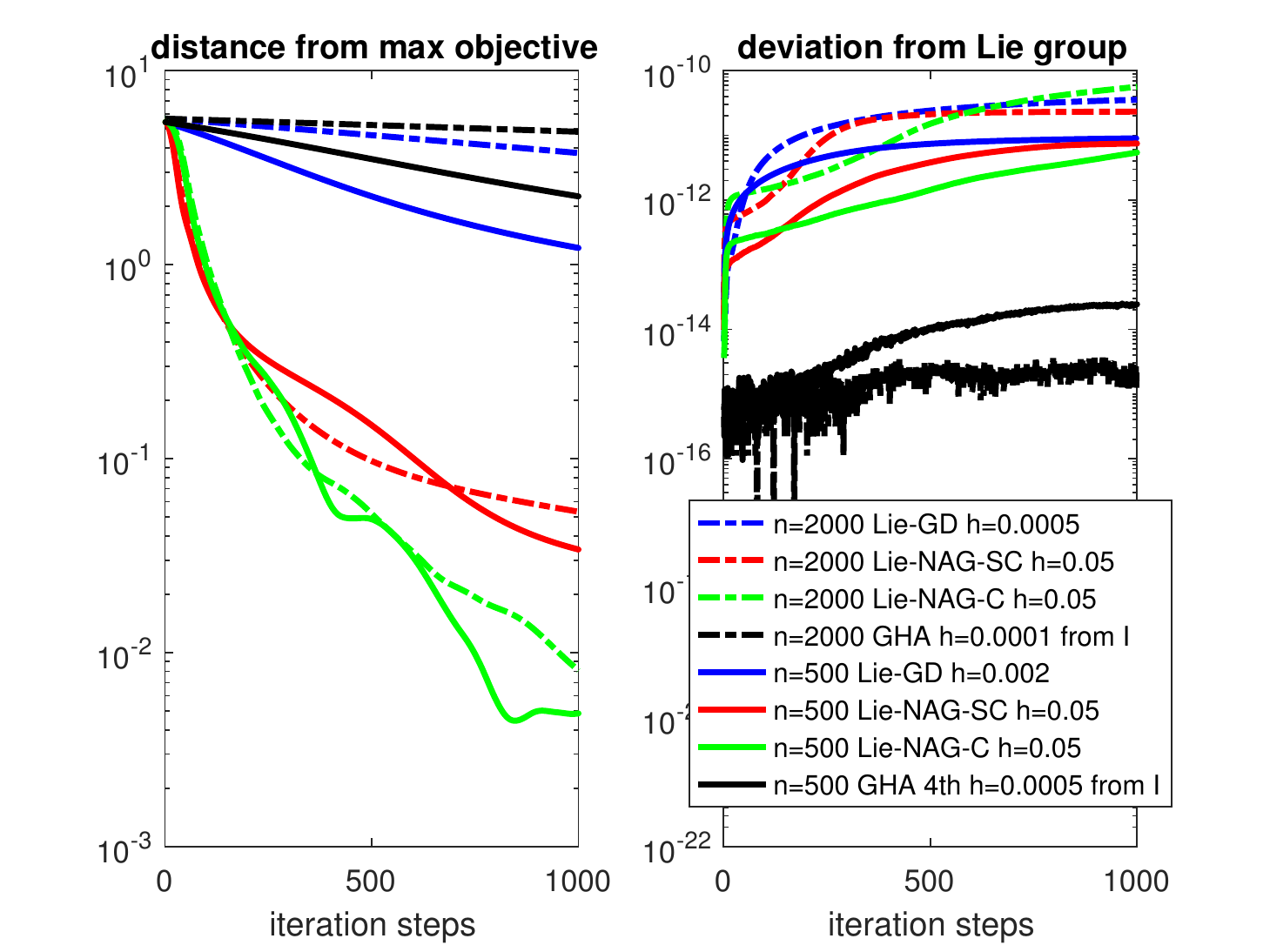}
\hspace{-20pt}
\caption{The computation of leading $l=2$ eigenvalues of 2000-dimensional scaled GOE, compared with that for 500-dimension. Other descriptions are same as in Fig.\ref{fig_largestKevalues_differentA_1b_n500_new}.}
\label{fig_largestKevalues_differentA_1b_n2000}
\end{figure}
Fig.\ref{fig_largestKevalues_differentA_1b_n2000} describes the same experiment as in Sec.\ref{sec_EV_bounded} when the dimension is $n=2000$ instead of $500$. When compared with the $n=500$ case, one sees Lie-GD and GHA converge much slower, but Lie-NAG's converge only marginally slower. This suggests that the advantage of variational methods increases in higher dimension, at least in this experiment.

\subsection*{$l$ largest eigenvalues of $A=(\Xi+\Xi^T)/2/\sqrt{n}$: $n=500$ result in wallclock count}
\begin{figure}[ht]
\hspace{-20pt}
\includegraphics[width=1.16\columnwidth]{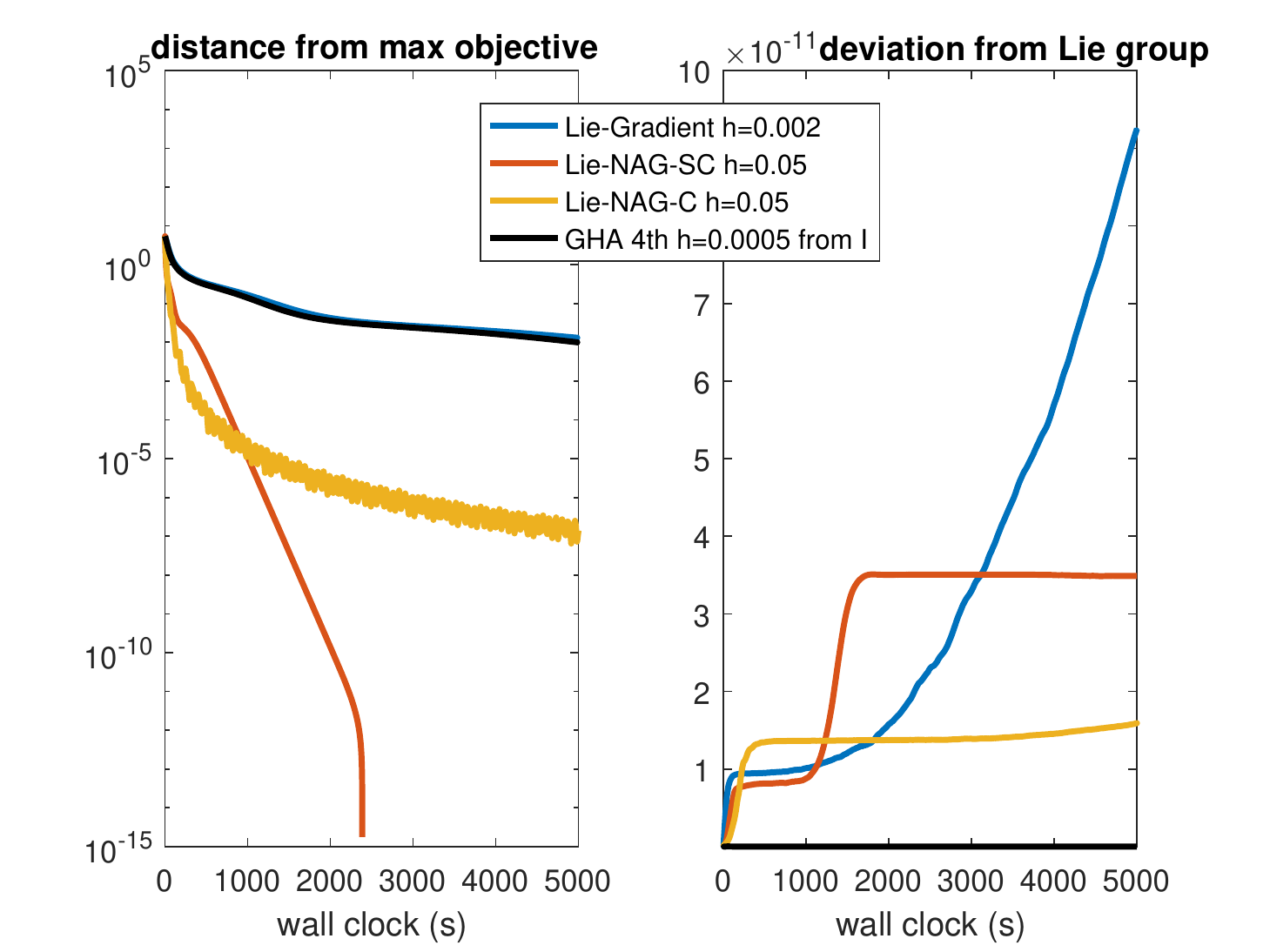}
\hspace{-20pt}
\caption{The computation of leading $l=2$ eigenvalues of 500-dimensional scaled GOE. All descriptions are same as in Fig.\ref{fig_largestKevalues_differentA_1b_n500_new}, except that x-axis is no longer in iteration steps but in wallclock.}
\label{fig_largestKevalues_differentA_1b_n500_new_wallClock}
\end{figure}
Fig.\ref{fig_largestKevalues_differentA_1b_n500_new_wallClock} illustrates the actual computational costs of methods used in this paper by reproducing Fig.\ref{fig_largestKevalues_differentA_1b_n500_new} with x-axis replaced by the time it took for each method to run. All qualitative conclusions remain unchanged. Experiments were conducted on a 4th-gen Intel Core laptop with integrated graphics unit running 64-bit Windows 7 and MATLAB R2016b.

\subsection*{Two 4th-order versions of Lie-NAG algorithms}
Version 1: more accurate but more computation
\begin{align*}
&\quad \phi^h = \phi_2^{a_1 h} \circ \phi_1^{b_1 h} \circ \phi_2^{a_2 h} \circ \phi_1^{b_2 h} \circ \phi_2^{a_3 h} \circ \phi_1^{b_3 h} \\
& \circ \phi_2^{a_4 h} \circ \phi_1^{b_3 h} \circ \phi_2^{a_3 h} \circ \phi_1^{b_2 h} \circ \phi_2^{a_2 h} \circ \phi_1^{b_1 h} \circ \phi_2^{a_1 h}+\mathcal{O}(h^5)
\end{align*}
where
\[
\begin{bmatrix} a_1 \\ a_2 \\ a_3 \\ a_4 \end{bmatrix} = 
\begin{bmatrix} 0.079203696431196 \\ 0.353172906049774 \\ -0.042065080357719 \\ 0.219376955753500 \end{bmatrix},
\]
\[
\begin{bmatrix} b_1 \\ b_2 \\ b_3 \end{bmatrix} = 
\begin{bmatrix} 0.209515106613362 \\ -0.143851773179818 \\ 0.434336666566456 \end{bmatrix}.
\]
Version 2: less accurate but less computation
\[
\phi_2^{a_1 h} \circ \phi_1^{b_1 h} \circ \phi_2^{a_2 h} \circ \phi_1^{b_2 h} \circ \phi_2^{a_2 h} \circ \phi_1^{b_1 h} \circ \phi_2^{a_1 h}
\]
where
\[
\begin{bmatrix} a_1 \\ a_2 \end{bmatrix} = 
\begin{bmatrix} \gamma_4/2 \\ (1-\gamma4)/2 \end{bmatrix},
\begin{bmatrix} b_1 \\ b_2 \end{bmatrix} = 
\begin{bmatrix} \gamma_4 \\ 1-2\gamma_4 \end{bmatrix}, \gamma_4=\frac{1}{2-2^{1/3}}.
\]
Details can be found, e.g., in \cite{mclachlan2002splitting}.
Swapping $\phi_1$ and $\phi_2$ will yield additional methods at the same order of accuracy. We present the above because $\phi_1$ is computationally more costly due to Cayley transform.

\subsection*{Some heuristic insights on the correction of the NAG dissipation coefficient in SG context}
Based on the discussion in the main text, heuristically, large $\gamma$ values correspond to lower `temperatures' and reduced variances accumulated from stochastic gradients. However, they also slow down the convergences of the stochastic processes, and yet we'd like to take advantage of the fast convergence of deterministic NAG dynamics. Therefore, we consider an additive correction that is small for small $t$ and increasing to infinity. 

For simplicity, restrict the correction to be a monomial of $t$, i.e., $\delta \gamma = c t^p$.
Then we select the value of $p$ by resorting to intuitions first gained from a linear deterministic case, for which our choice of $p$ has to lead to convergence because the deterministic solution is the mean of the stochastic solution. It is proved in \cite{artstein1976asymptotic} that a sufficient condition for asymptotic stability of $\ddot{q}+\gamma(t)\dot{q}+q=0$ is
\[
	\limsup_{T\rightarrow\infty} \left(\frac{1}{T^2}\int_0^T \gamma(t) dt \right) < \infty	\quad \text{and}\quad \gamma(t)\geq \gamma_0
\]
for some constant $\gamma_0>0$. It is easy to check that $\gamma(t)=\gamma_0+c t^p$ or $3/t+c t^p$ satisfies this condition if $p\leq 1$, but not when $p>1$. We thus inspect the boundary case of $p=1$ for a fast decay of variance at large $t$, now in a stochastic setup:
\begin{equation}
\begin{cases}
	dq &= p dt \\
	dp &= (-(\gamma_0(t)+c t)p - q) dt + \sigma dW
\end{cases}, 
\label{eq_toyLinearSDE}
\end{equation}
where $\gamma_0$ is either a constant or $3/t$. Since this is a linear SDE whose solution is Gaussian, it suffices to show the convergences of the (deterministic) mean and covariance evolutions in order to establish the SDE's convergence.

It is standard to show the mean $x(t):=\mathbb{E}[q(t),p(t)]$ satisfies a closed non-autonomous ODE system, and the covariance $V(t):=\mathbb{E}\big[ [ q(t)-\mathbb{E}[q(t)], ~ p(t)-\mathbb{E}[p(t)] ]^T [ q(t)-\mathbb{E}[q(t)], ~ p(t)-\mathbb{E}[p(t)] ] \big]$ satisfies another. These systems are not analytically solvable, but we can analyze their long time behavior by asymptotic analysis. 

More precisely, under the ansatz of $\mathbb{E}[q]=b t^a+o(t^a)$, matching leading order terms in the mean ODE leads to
\[
	\mathbb{E}[q(t)] \sim t^{-1/c},	\qquad \mathbb{E}[p(t)] \sim t^{-1/c-1}
\]
for both constant $\gamma_0$ and $\gamma_0(t)=3/t$ in \eqref{eq_toyLinearSDE}.

Under the ansatz of $\text{Var}[q]=b_1 t^{a_1}+o(t^{a_1})$, $\text{Var}[p]=b_2 t^{a_2}+o(t^{a_2})$, $\mathbb{E}[(q-\mathbb{E}q)(p-\mathbb{E}p)]=b_3 t^{a_3}+o(t^{a_3})$, matching leading order terms in the covariance ODE leads to
\begin{align*}
	\text{Var}[q] = \frac{1}{c(2-c)}t^{-1}, \quad \text{Var}[p] = \frac{1}{2c}t^{-1}, \\
	\mathbb{E}[(q-\mathbb{E}q)(p-\mathbb{E}p)] = \frac{1}{2c(c-2)}t^{-2}.	\quad
\end{align*}
Note this means, for small but positive $c$, convergence is guaranteed, and covariance converges slower than mean, at the rate independent of $c$.

Therefore, adding $ct$ to $\gamma$ in the original NAG's works in the linear case, and thus it has a potential to work for nonlinear cases (e.g., Lie group versions). And it does in experiments (Sec.\ref{sec_SG_experiments}).

\subsection*{Hamiltonian Formulation}
In this section, we give a Hamiltonian formulation of the variational optimization equation~\eqref{eq:E-L} and prove the conformal symplecticity of its flow.
\paragraph{Symplectic Structure on $\mathsf{G} \times \mathfrak{g}^{*}$}~\\ 
Let $\lambda$ be the left trivialization of $T^{*}\mathsf{G}$, i.e.,
\begin{equation*}
  \lambda \colon T^{*}\mathsf{G} \to \mathsf{G} \times \mathfrak{g}^{*};
  \qquad
  p_{g} \mapsto \parentheses{ g, T_{e}^{*}L_{g}(p_{g}) }.
\end{equation*}
Then its inverse is given by
\begin{equation*}
  \lambda^{-1}\colon \mathsf{G} \times \mathfrak{g}^{*} \to T^{*}\mathsf{G};
  \qquad
  (g,\mu) \mapsto T_{g}^{*}L_{g^{-1}}(\mu).
\end{equation*}
Let $\Theta$ and $\Omega \defeq -\d\Theta$ be the canonical one-form and the symplectic structure on $T^{*}\mathsf{G}$, and $\theta$ and $\omega$ be their pull-backs via the left trivialization, i.e.,
\begin{equation*}
  \theta \defeq (\lambda^{-1})^{*} \Theta,
  \qquad
  \omega \defeq (\lambda^{-1})^{*} \Omega.
\end{equation*}
According to \citet[Proposition~4.4.1 on p.~315]{AbMa1978} (see also the reference therein), for any $(g,\mu) \in \mathsf{G} \times \mathfrak{g}^{*}$ and any $(v,\alpha), (w,\beta) \in T_{(g,\mu)}(\mathsf{G} \times \mathfrak{g}^{*})$,
\begin{equation}
  \label{eq:theta}
  \theta_{(g,\mu)}(w,\beta)
  = \ip{ \mu }{ T_{g}L_{g^{-1}}(w) }
\end{equation}
and
\begin{equation}
  \label{eq:omega}
  \begin{split}
  \omega_{(g,\mu)}&((v,\alpha),(w,\beta)) \\
  &= \ip{ \beta }{ T_{g}L_{g^{-1}}(v) } - \ip{ \alpha }{ T_{g}L_{g^{-1}}(w) } \\
  &\quad + \ip{ \mu }{ [T_{g}L_{g^{-1}}(v), T_{g}L_{g^{-1}}(w)] }.
  \end{split}
\end{equation}
Given a function $h\colon \mathsf{G} \times \mathfrak{g}^{*} \to \R$, the corresponding Hamiltonian vector field $X_{h} \in \mathfrak{X}(\mathsf{G} \times \mathfrak{g}^{*})$ defined by $\ins{X_{h}}{\omega} = \d{h}$ is given by
\begin{equation*}
  X_{h}(g,\mu) = \parentheses{
    T_{e}L_{g}\parentheses{ \fd{h}{\mu} },\,
    \ad_{\fd{h}{\mu}}^{*}\mu - T_{e}^{*}L_{g}(\d_{g}h)
  },
\end{equation*}
where $\d_{g}$ stands for the exterior differential with respect to $g$.


\paragraph{Legendre Transform and Hamiltonian Formulation}~\\
We may apply a time-independent Legendre transform using the initial Lagrangian as follows:
Let us define the initial Lagrangian $L_{0}\colon \mathsf{G} \times \mathfrak{g} \to \R$ by setting $L_{0}(g,\xi) \defeq L(g,\xi,0)$, and the time-independent Legendre transform
\begin{equation*}
  \FL_{0}\colon \mathfrak{g} \to \mathfrak{g}^{*};
  \qquad
  \xi \mapsto \fd{L_{0}}{\xi}(g,\xi,t) = r(0)\,\mathbb{I}(\xi),
\end{equation*}
whose inverse is given by
\begin{equation*}
  (\FL_{0})^{-1}\colon \mathfrak{g}^{*} \to \mathfrak{g};
  \qquad
  \mu \mapsto \frac{1}{r(0)}\mathbb{I}^{-1}(\mu).
\end{equation*}
We define the initial Hamiltonian $H\colon \mathsf{G} \times \mathfrak{g}^{*} \to \R$ as follows:
\begin{align*}
  H(g, \mu)
  &\defeq \ip{\mu}{(\FL_{0})^{-1}(\mu)} - L_{0}\parentheses{g, (\FL_{0})^{-1}(\mu) } \\
  &= \frac{1}{2r(0)} \ip{\mu}{\mathbb{I}^{-1}(\mu)} + r(0) f(g).
\end{align*}
Its associated Hamiltonian vector field $X_{H}$ on $\mathfrak{g}^{*}$ is defined as $\ins{X_{H}}\omega = \d{H}$ using the symplectic form $\omega$ on $\mathsf{G} \times \mathfrak{g}^{*}$ (see \eqref{eq:omega}):
\begin{equation*}
  X_{H}(\mu) = \ad_{\fd{H}{\mu}}^{*}\mu - T_{e}^{*}\mathsf{L}_{g}(\d_{g}H).
\end{equation*}
Then we may rewrite \eqref{eq:E-L} as follows:
\begin{equation}
  \label{eq:Ham0}
  \begin{split}
  \dot{\mu}
  &= -\gamma(t) \mu + \ad_{\fd{H}{\mu}}^{*}\mu - T_{e}^{*}\mathsf{L}_{g}(\d_{g}H) \\
  &= X_{H}(\mu) - \gamma(t) \mu,
  \end{split}
\end{equation}
where we set $\gamma(t) \defeq r'(t)/r(t)$.

\paragraph{Conformal Symplecticity}~\\
Given the Lagrangian of the form $r(t) L_{0}(q,\dot{q})$, the Euler--Lagrange equation is
\begin{equation}
  \label{eq:E-L_0}
  \od{}{t}\parentheses{ r(t)\pd{L_{0}}{\dot{q}} } - r(t)\pd{L_{0}}{q} = 0.
\end{equation}
We would like to show that the two-form $r(t) \d{p}\wedge \d{q}$ with $p \defeq \tpd{L_{0}}{\dot{q}}$ is preserved in time in two different ways.
The first is based on the variational principle: Consider
\[
    \d\d \int_{t_0}^{t_1} r(t) L_{0}(q,\dot{q}) dt,
\]
which is obviously 0 because any exact form is closed. On the other hand, it is the same as (due to integration by parts)
\[
    \d \left( \int_{t_0}^{t_1} \left( r \frac{\partial L_{0}}{\partial q} \d{q} - \frac{d}{dt}\left(r \frac{\partial L_{0}}{\partial \dot{q}} \right) \d{q} \right) dt + \left. r \frac{\partial L_{0}}{\partial \dot{q}} \d{q} \right|_{t_0}^{t_1} \right)
\]
The first term is zero because of \eqref{eq:E-L_0}. Therefore,
\[
  0=\d\left( \left. r \frac{\partial L_{0}}{\partial \dot{q}} \d{q} \right|_{t_0}^{t_1} \right)
  = \d (r p \d{q})|_{t_0}^{t_1} = r \d{p}\wedge \d{q} |_{t_0}^{t_1}
\]

The second proof uses the Hamiltonian formulation.
We may write the Hamiltonian system corresponding to the Euler--Lagrange equation for the Lagrangian of the form $r(t) L_{0}(q,\dot{q})$ as follows:
\begin{equation}
  \label{eq:Ham_with_dissipation}
  \dot{q} = \pd{H}{p},
  \qquad
  \dot{p} = -\pd{H}{q} - \gamma(t)p,
\end{equation}
where the Hamiltonian $H$ is obtained via the Legendre transform of $L_{0}(q,\dot{q})$ not $r(t) L_{0}(q,\dot{q})$.

In what follows, we would like to generalize the work of \citet{McPe2001}---in which $\gamma$ is set to be constant---to derive the conformal symplecticity of dissipative Hamiltonian systems of the above type.
Let $P$ be an (exact) symplectic manifold with symplectic form $\Omega = -\d\Theta$ and $H\colon P \to \R$ be a (time-independent) Hamiltonian.
Let us define a time-dependent vector field $X_{H,(\cdot)}\colon \R \times P \to TP$ by defining, for any $t \in \R$, a vector field $X_{H,t}$ on $P$ by setting
\begin{equation*}
  X_{H,t} \defeq X_{H} - Z_{t},
\end{equation*}
where $X_{H}$ is the Hamiltonian vector field on $P$ defined by
\begin{equation*}
  \ins{X_{H}}\Omega = \d{H},
\end{equation*}
and the time-dependent vector field $Z_{(\cdot)}\colon \R \times P \to TP$ is defined as follows:
Let $\Omega_{(\cdot)}$ be the time-dependent symplectic form on $P$ defined as, for any $t \in \R$, 
\begin{equation*}
  \Omega_{t} \defeq r(t) \Omega.
\end{equation*}
We define $Z_{t}$ by setting
\begin{equation*}
  \ins{Z_{t}}\Omega_{t} = -r'(t)\Theta.
\end{equation*}

In terms of the canonical coordinates $(q,p)$ for $P$, we have
\begin{equation*}
  Z_{t} = p_{i} \pd{}{p_{i}},
\end{equation*}
and hence we have
\begin{equation*}
  X_{H,t}(q,p) = \pd{H}{p_{i}} \pd{}{q^{i}} + \parentheses{ \pd{H}{q^{i}} + \gamma(t)p_{i} } \pd{}{p_{i}}.
\end{equation*}
Therefore, $X_{H,t}$ yields the dissipative Hamiltonian system~\eqref{eq:Ham_with_dissipation}.

Let $\Phi \colon \R \times \R \times P \to P$ be the time-dependent flow of $X_{H,(\cdot)}$ (assuming for simplicity that the solutions exist for any time $t \in \R$ with any initial time $t_{0} \in \R$).
Then, for any $t_{0}, t_{1} \in \R$ (see, e.g., \citet[Proposition~22.15]{Le2013}),
\begin{align*}
  &\left.\od{}{t} \Phi_{t,t_{0}}^{*} \Omega_{t}\right|_{t=t_{1}} \\
  &= \Phi_{t_{1},t_{0}}^{*} \parentheses{ \left.\pd{}{t}\Omega_{t}\right|_{t=t_{1}} + \mathcal{L}_{X_{H,t_{1}}}\Omega_{t_{1}} } \\
  &= \Phi_{t_{1},t_{0}}^{*} \parentheses{ r'(t_{1})\Omega + \mathcal{L}_{X_{H}}\Omega_{t_{1}} + \mathcal{L}_{Z^{t_{1}}}\Omega_{t_{1}} } \\
  &= \Phi_{t_{1},t_{0}}^{*} \parentheses{ r'(t_{1})\Omega + r(t_{1})\mathcal{L}_{X_{H}}\Omega + r(t_{1})\mathcal{L}_{Z_{t_{1}}}\Omega } \\
  &= \Phi_{t_{1},t_{0}}^{*} \parentheses{ r'(t_{1})\Omega - r(t_{1})\parentheses{ \d\ins{Z_{t_{1}}}\Omega + \ins{Z_{t_{1}}}\d\Omega } } \\
  &= \Phi_{t_{1},t_{0}}^{*} \parentheses{ r'(t_{1})\Omega - \d\ins{Z_{t_{1}}}\Omega_{t_{1}} } \\
  &= \Phi_{t_{1},t_{0}}^{*} \parentheses{ r'(t_{1})\Omega - \d(-r'(t_{1})\Theta) } \\
  &= \Phi_{t_{1},t_{0}}^{*} \parentheses{ r'(t_{1})\Omega + r'(t_{1})\d\Theta } \\
  &= 0.
\end{align*}
Therefore, we have
\begin{equation}
  \label{eq:conformal_symplecticity}
  \Phi_{t_{1},t_{0}}^{*}\Omega_{t_{1}} = \Omega_{t_{0}}.
\end{equation}

Now, \eqref{eq:Ham0} is a special case of the above setting.
Specifically, we may define a time-dependent vector field $Z_{(\cdot)} \colon \R \times (\mathsf{G} \times \mathfrak{g}^{*}) \to T(\mathsf{G} \times \mathfrak{g}^{*})$ by setting, for any $t \in \R$,
\begin{equation*}
  \ins{Z_{t}}\omega_{t} = -r'(t)\theta,
\end{equation*}
where $\omega_{t} \defeq r(t) \omega$.
This yields $Z_{t}(\mu) = \gamma(t) \mu$.
Then we may write \eqref{eq:Ham0} as 
\begin{equation*}
  \dot{\mu}(t) = (X_{H} - Z_{t})(\mu(t)).
\end{equation*}
Let $\varphi\colon \R \times \R \times (\mathsf{G} \times \mathfrak{g}^{*}) \to \mathsf{G} \times \mathfrak{g}^{*}$ be the time-dependent flow of this system.
Then, the conformal symplecticity~\eqref{eq:conformal_symplecticity} implies that, for any $t_{0}, t_{1} \in \R$,
\begin{equation*}
  \varphi_{t,t_{0}}^{*} \omega_{t} = \omega_{t_{0}}.
\end{equation*}

\end{document}